\documentclass{article}

\PassOptionsToPackage{numbers, compress}{natbib}

\usepackage[final]{neurips_2025}




\usepackage[utf8]{inputenc} 
\usepackage[T1]{fontenc}    
\usepackage[colorlinks,
            linkcolor=red,
            anchorcolor=green,
            citecolor=blue
            ]{hyperref} 
\usepackage{url}            
\usepackage{booktabs}       
\usepackage{amsfonts}       
\usepackage{nicefrac}       
\usepackage{microtype}      
\usepackage{xcolor}         

\usepackage{amsmath}
\usepackage{amssymb}
\usepackage{mathtools}
\usepackage{amsthm}
\usepackage{xspace}
\usepackage{enumitem}
\usepackage{xcolor}
\usepackage{thm-restate}
\usepackage{wrapfig}

\usepackage{pifont}
\usepackage[capitalize,noabbrev]{cleveref}

\theoremstyle{plain}
\newtheorem{theorem}{Theorem}[section]

\theoremstyle{definition}
\newtheorem{definition}[theorem]{Definition}

\theoremstyle{remark}
\newtheorem{remark}[theorem]{Remark}

\newcommand{\method}{RePO\xspace}
\newcommand{\newmethod}{RePO++\xspace}
\newcommand{\ie}{\emph{i.e., }}
\newcommand{\eg}{\emph{e.g., }}

\newcommand{\cf}{\emph{cf. }}

\newcommand{\rebuttal}[1]{{{#1}}}
\usepackage{multirow}
\title{RePO: Understanding Preference Learning Through ReLU-Based Optimization}

\author{
    Junkang Wu$^{1}$\thanks{Work done at Alibaba Group.}~~Kexin Huang$^{1}$~~Xue Wang$^{2}$~~Jinyang Gao$^{2}$~~Bolin Ding$^{2}$\\
    \textbf{Jiancan Wu$^{1,3}$~~Xiangnan He$^{4}$\thanks{Xiangnan He and Xiang Wang are the corresponding authors.}~~Xiang Wang$^{1}$\footnotemark[2]}\\
    $^{1}$University of Science and Technology of China, $^{2}$Alibaba Group \\
    $^{3}$Institute of Dataspace, Hefei Comprehensive National Science Center \\
    $^{4}$MoE Key Lab of BIPC, University of Science and Technology of China \\
    \texttt{\{jkwu0909, xiangwang1223, xiangnanhe\}@gmail.com}
}

\begin{document}

\maketitle

\begin{abstract}
    Preference learning has become a common approach in various recent methods for aligning large language models with human values. These methods optimize the preference margin between chosen and rejected responses, subject to certain constraints for avoiding over-optimization. In this paper, we report surprising empirical findings that simple ReLU activation can learn meaningful alignments even using \emph{none} of the following: (i) sigmoid-based gradient constraints, (ii) explicit regularization terms. Our experiments show that over-optimization does exist, but a threshold parameter $\gamma$ plays an essential role in preventing it by dynamically filtering training examples. 
    We further provide theoretical analysis demonstrating that ReLU-based Preference Optimization (RePO) corresponds to the convex envelope of the 0-1 loss, establishing its fundamental soundness.
    Our ``RePO'' method achieves competitive or superior results compared to established preference optimization approaches. We hope this simple baseline will motivate researchers to rethink the fundamental mechanisms behind preference optimization for language model alignment.
\end{abstract}

\section{Introduction}
\label{sec:intro}

Recent years have witnessed significant advances in aligning large language models (LLMs) with human preferences \cite{Gemmaini23,Llama2,OpenAI23,Bubeck23}. A primary approach, Reinforcement Learning from Human Feedback (RLHF) \cite{PPO}, first trains a reward model on preference data and then optimizes the LLM via reinforcement learning. While effective, RLHF's computational costs and training instability \cite{DPO,SlicHF} have motivated simpler offline alternatives like Direct Preference Optimization (DPO) \cite{DPO}, which bypasses explicit reward modeling.
Take DPO as a representative example: it optimizes the alignment margin between a preferred and a less-preferred response to the same prompt, as Figure \ref{fig:teaser} shows. The alignment of each response is quantified via an implicit reward, defined as the log-ratio of the predicted likelihoods under the policy model (\ie the LLM being optimized) and a reference model (\eg a fixed supervised fine-tuned (SFT) model).

A fundamental challenge in preference learning is \emph{over-optimization} --- where models excessively amplify reward margins between preferred and non-preferred responses, potentially degrading generation quality \cite{AmodeiOSCSM16,GaoSH23,DuboisLTZGBGLH23}. Several approaches have been developed to mitigate this issue. 
DPO \cite{DPO} and SimPO \cite{SimPO} employ sigmoid weighting through log-sigmoid activation that diminishes gradients as reward margins increase, naturally preventing over-optimization. The $\beta$ parameter controls gradient distribution sharpness --- larger values produce more binary-like gradients, as illustrated in Figure \ref{fig:intro_gradient}.
SLiC-HF \cite{SlicHF} addresses over-optimization differently by incorporating an SFT regularization term that anchors the model to its initial policy \cite{reward_hacking}, preventing excessive drift toward maximizing preference signals. 
These mechanisms effectively balance preference optimization with generation quality preservation, forming the foundation of current preference learning approaches.

Here, we present a surprising empirical finding: a simple ReLU activation can work well with \emph{none} of the above strategies for mitigating over-optimization. Our analysis reveals that as parameter $\beta$ in SimPO approaches infinity, its sigmoid weighting naturally converges to a binary thresholding mechanism --- motivating our exploration of \textbf{Re}LU-based \textbf{P}reference \textbf{O}ptimization (\textbf{RePO}). 
This mechanism uses a single ReLU function with only one hyperparameter $\gamma$, creating a clear decision boundary that selectively updates sample pairs with insufficient reward margins ($M_\theta < \gamma$) while filtering out well-separated pairs ($M_\theta \geq \gamma$). 
 We illustrate this ``RePO'' method in Figure~\ref{fig:teaser}.

\begin{figure*}[t]
    \begin{center}
        \includegraphics[width=1.0\columnwidth]{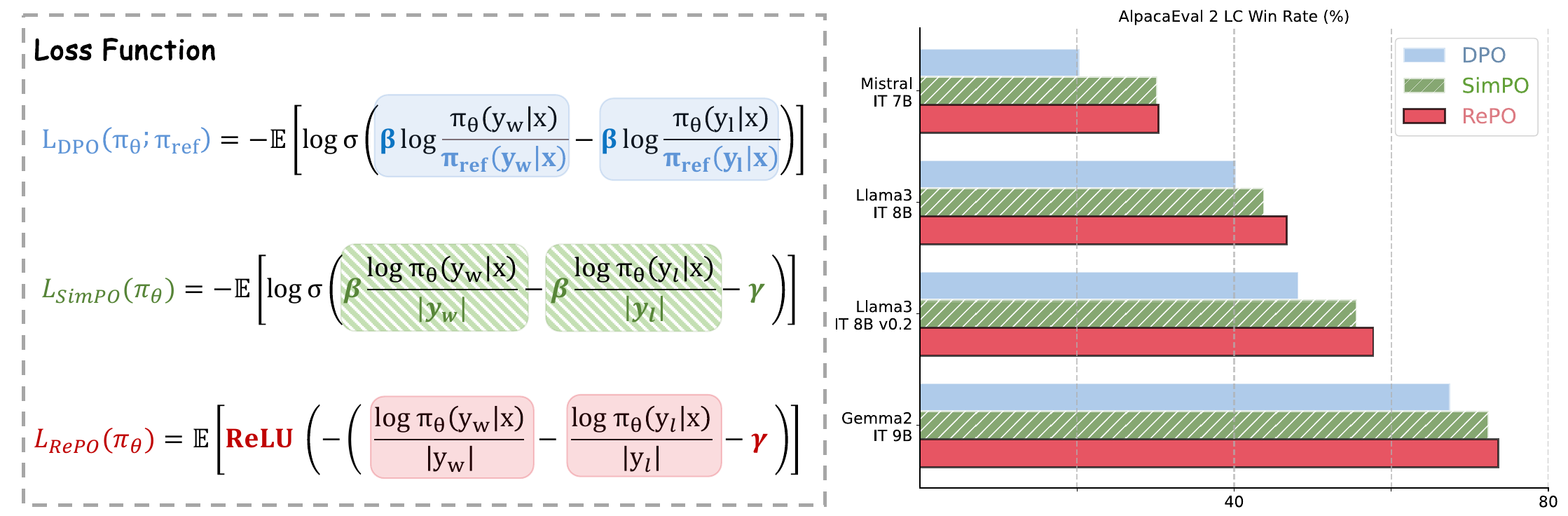}
        \caption{
            \textbf{Comparing preference learning mechanisms.} RePO employs a simpler binary thresholding mechanism than SimPO and DPO, as highlighted in the shaded box. Despite its simplicity, this mechanism achieves competitive results by naturally preventing over-optimization.
        }
        \label{fig:teaser}
    \end{center}
    \vspace{-20pt}
\end{figure*}


\definecolor{linecolor}{RGB}{220,122,130}
\begin{wrapfigure}{r}{0.55\textwidth}
    \centering
    \!\!\!\!\!\!\!\! \includegraphics[width=0.55\textwidth]{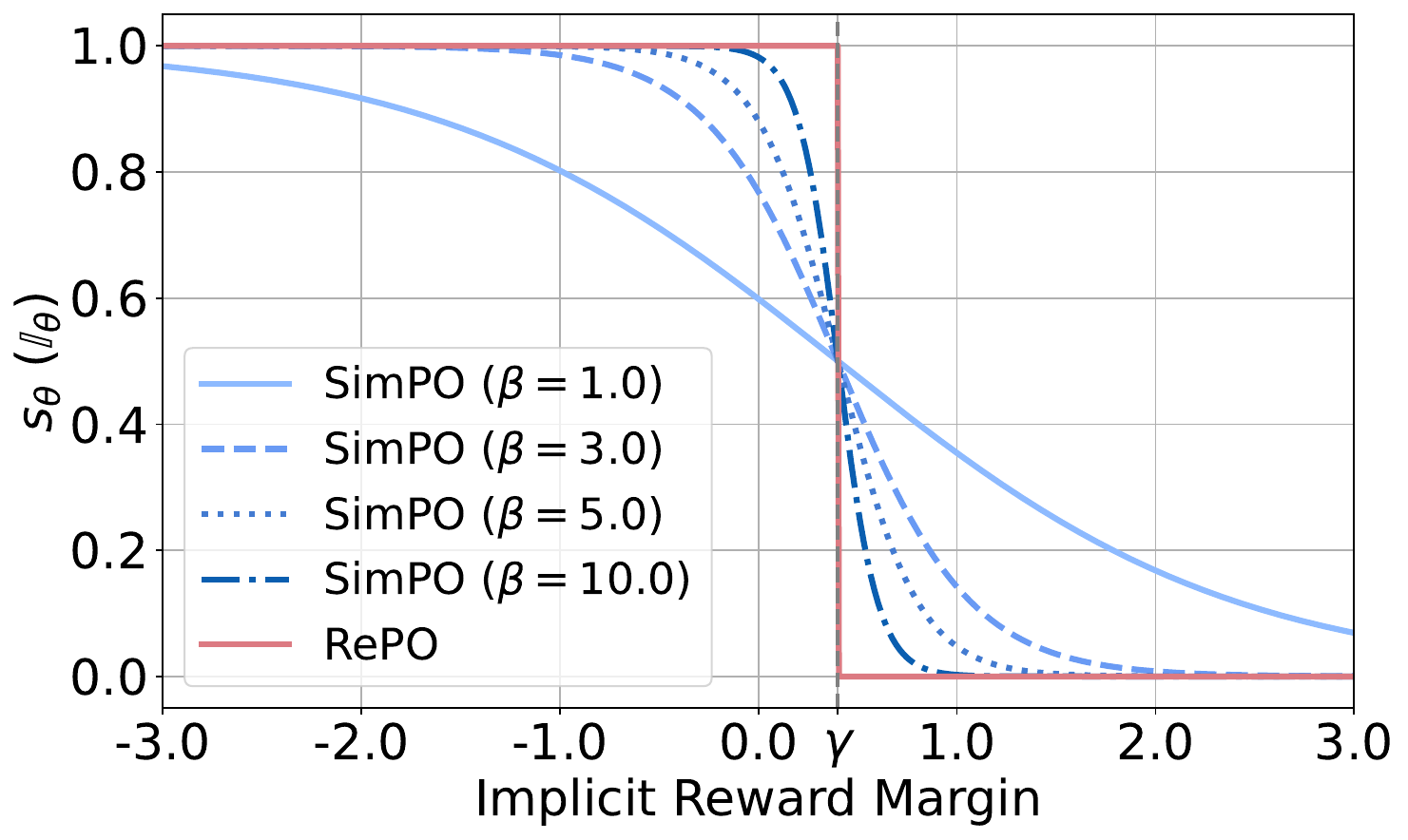}
    \caption{
    Gradient weighting functions of SimPO ($s_\theta$) and \method{} ($\mathbb{I}(M_\theta < \gamma)$). 
    As $\beta \to \infty$, $s_\theta$ converges to the binary indicator (\textcolor{linecolor}{red line}), establishing \method{} as the limit case of SimPO.
    }
    \label{fig:intro_gradient}
\end{wrapfigure}
 Thanks to its conceptual simplicity, RePO can serve as a hub that relates several existing methods. In essence, our method can be viewed as ``SimPO \emph{without} log-sigmoid'' or ``SLiC-HF \emph{without} SFT regularization term''. Interestingly, RePO is related to each method by removing one of its core components. Even so, RePO effectively prevents over-optimization while performing competitively or better (\cf Figure \ref{fig:teaser}).

We empirically show that over-optimization \emph{do} exist, but ReLU activation with threshold $\gamma$ is critical to prevent such solutions. 
This implies that in over-optimization regimes, selecting \emph{which} examples to learn from is more critical than determining \emph{how much} to learn from each.
The $\gamma$ threshold induces an emergent data filtering behavior, focusing dynamically on challenging samples relative to the model's current capability.
Our theoretical analysis reveals that RePO's ReLU loss corresponds precisely to the \textit{convex envelope} of the 0-1 loss 
(Theorem~\ref{thm:convex_env}), explaining why such a simple mechanism is so effective. 

Our simple baseline suggests that the ReLU activation with a proper threshold $\gamma$ can be an essential reason for the common success of related methods. We believe this work's significance lies in revealing how preference learning principles may be simpler than previously thought. By questioning conventional wisdom about necessary components, we hope to motivate researchers to reconsider the fundamental mechanisms behind preference optimization.

\section{Preliminaries}
\label{sec:preliminaries}

\textbf{Directed Preference Optimization (DPO).}
DPO~\citep{DPO} stands out as a leading method for offline preference optimization by eliminating the need for an explicit reward model. Instead, it reformulates the reward \(r(x,y)\) as a closed-form expression based on policy ratios:
\begin{equation}
    r(x,y) = \beta \log \frac{\pi_\theta(y \mid x)}{\pi_{\text{ref}}(y \mid x)} + \beta \log Z(x),
\end{equation}
where \(Z(x)\) is a partition function that does not depend on \(y\). This leads to the DPO loss for a given triplet \((x, y_w, y_l)\) as:
\begin{equation}
    \begin{aligned}
        \mathcal{L}_{\text{DPO}}(\pi_\theta; \pi_{\text{ref}}) =  - \mathbb{E}_{(x, y_w, y_l) \in \mathcal{D}} \left[ \log \sigma \left( \beta \left( \log \frac{\pi_\theta(y_w \mid x)}{\pi_\theta(y_l \mid x)} - \log \frac{\pi_{\text{ref}}(y_w \mid x)}{\pi_{\text{ref}}(y_l \mid x)} \right) \right) \right],
    \end{aligned}
\end{equation}
where \(\sigma(\cdot)\) denotes the sigmoid function. This loss encourages the policy \(\pi_\theta\) to prefer \(y_w\) over \(y_l\) in alignment with the reference policy.
\rebuttal{

\textbf{Sequence Likelihood Calibration (SLiC-HF).}
SLiC-HF~\citep{SlicHF} advances preference optimization with two key innovations: (1) it employs a sequence-level calibration loss that contrasts the log-probability difference between preferred and dispreferred responses using a margin $\gamma$, and (2) it integrates a regularization term to prevent divergence from the SFT policy, avoiding the need for an explicit KL penalty.
The SLiC-HF loss function is defined as:
\begin{equation}
    \begin{aligned}
        \mathcal{L}_{\text{SLiC-HF}}(\pi_\theta) = &\mathbb{E}_{(x, y_w, y_l) \in \mathcal{D}} \Big[ \text{ReLU} \Big( - \Big(  \log {\pi_\theta(y_w \mid x)} - \log \pi_\theta (y_l \mid x ) - \gamma \Big) \Big) - \lambda \log {\pi_\theta(y_w \mid x)} \Big].
    \end{aligned}
\end{equation}
}

\textbf{Simple Preference Optimization (SimPO).}
SimPO~\citep{SimPO} advances preference optimization with two key innovations: (1) it normalizes the reward by the length of the response, calculating the average log-probability per token for a response under the policy \(\pi_\theta\), and (2) it incorporates a target reward margin \(\gamma\) to ensure that the reward difference between the preferred and less preferred responses exceeds this margin. The SimPO loss function is defined as:
\begin{equation}
    \begin{aligned}
        \mathcal{L}_{\text{SimPO}}(\pi_\theta) = - \mathbb{E}_{(x, y_w, y_l) \in \mathcal{D}}\left[ \log \sigma  \left( \beta \left( \frac{\log \pi_\theta(y_w \mid x)}{|y_w|} - \frac{\log \pi_\theta(y_l \mid x)}{|y_l|}  - \gamma \right) \right) \right],
    \end{aligned}
\end{equation}
where \(|y|\) denotes the number of tokens in response \(y\), ensuring length-aware scaling of rewards, and \(\gamma\) is the predefined margin that enforces a minimum difference in rewards between \(y_w\) and \(y_l\). To align with subsequent discussions, we modify the original SimPO formulation by setting $\gamma$ to $\gamma / \beta$.

\definecolor{BrickRed}{rgb}{0.8, 0.25, 0.33}
\definecolor{ForestGreen}{rgb}{0.13, 0.55, 0.13}

\section{Exploring Simple ReLU Activation in Preference Learning}
\label{sec:analysis}
In this section, we explore what makes a simple ReLU activation function effective for preference learning. We first examine the surprising relationship between ReLU activation and sigmoid weighting through empirical experiments. Then, we investigate the key properties that emerge from this simple mechanism, specifically through the lens of gradient behavior, data filtering patterns, and over-optimization control.
\subsection{Examining ReLU-based Preference Optimization}
\label{sec:method}

\textbf{Simplification exploration.}
Our exploration began by questioning whether log-sigmoid activation or SFT regularization are truly necessary for mitigating over-optimization. We simplified the SimPO loss function through two key modifications: (i) removing the hyperparameter $\beta$, and (ii) replacing the log-sigmoid function with a ReLU activation.

We adopt the length-normalized \textit{implicit reward margin} $M_\theta$ (as introduced in SimPO \citep{SimPO}):
\begin{equation}
M_\theta = \frac{\log \pi_\theta(y_w \mid x)}{|y_w|} - \frac{\log \pi_\theta(y_l \mid x)}{|y_l|},
\end{equation}
which quantifies the policy's preference between responses.
Using $M_\theta$, we examine a loss function with the following form:
\begin{equation}
    \mathcal{L}_{\text{\method}}(\pi_\theta) = \mathbb{E}_{(x, y_w, y_l) \in \mathcal{D}} \left[ \mathrm{ReLU} \left( - (M_\theta - \gamma) \right) \right],
    \label{eq:repo}
\end{equation}
where $\gamma \in [0,1]$ is the sole hyperparameter representing the \textit{target reward margin}. 

\textbf{Gradient behavior investigation.}
We examine the gradient dynamics of \method{} and SimPO to reveal how our simplified approach addresses over-optimization:
\begin{equation}
    \nabla_\theta \mathcal{L}_{\text{SimPO}}(\pi_\theta) = -\beta \mathbb{E}_{\mathcal{D}} \left[ s_\theta \cdot \left( \nabla_{\theta, y_w} - \nabla_{\theta, y_l} \right) \right],\label{eq:simpo_grad}
\end{equation}
\begin{equation}
    \nabla_\theta \mathcal{L}_{\text{\method}}(\pi_\theta) = -\mathbb{E}_{\mathcal{D}} \left[ \mathbb{I}{(M_\theta < \gamma)} \cdot \left( \nabla_{\theta, y_w} - \nabla_{\theta, y_l} \right) \right],\label{eq:repo_grad}
\end{equation}

where $s_\theta = \sigma(\beta(-M_\theta + \gamma))$ is SimPO's sigmoid weighting function. 
The terms $\nabla_{\theta, y_w} = \frac{1}{|y_w|} \nabla_\theta \log \pi_\theta(y_w \mid x)$ and $\nabla_{\theta, y_l} = \frac{1}{|y_l|} \nabla_\theta \log \pi_\theta(y_l \mid x)$ correspond to the gradients that increase the probability of the ``winning'' response $y_w$ and decrease the probability of the ``losing'' response $y_l$, respectively.
The scaling factor $\beta$ in Equation \ref{eq:simpo_grad} linearly amplifies gradient magnitudes but does not alter the relative update directions in adaptive optimizers like Adam \cite{kingma2014adam}, as the momentum terms automatically normalize scale variations. We therefore omit $\beta$ in Figure \ref{fig:intro_gradient} for clearer visualization of the weighting function shapes.

The key insight is that \method{}'s ReLU-based gradient (Equation \ref{eq:repo_grad}) applies uniform updates only to pairs with $M_\theta < \gamma$, while SimPO's gradient (Equation \ref{eq:simpo_grad}) uses continuous $\beta$-scaled weights. Figure \ref{fig:intro_gradient} visualizes this difference, showing \method{} as the limiting case of SimPO as $\beta \to \infty$.

\begin{restatable}[Gradient Equivalence in the SimPO-to-\method{} Limit]{lemma}{gradientlimit}
\label{lemma:gradient_limit}
Under the same $M_\theta$ and $\gamma$ definitions, the SimPO gradient converges pointwise to the \method{} gradient as $\beta \to \infty$:
\begin{equation}
    \lim_{\beta \to \infty} \nabla_\theta \mathcal{L}_{\text{SimPO}} = \nabla_\theta \mathcal{L}_{\text{\method}}.
\end{equation}
\end{restatable}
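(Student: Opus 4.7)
The plan is to reduce Lemma~\ref{lemma:gradient_limit} to two ingredients: the pointwise convergence of the sigmoid weight $s_\theta = \sigma(\beta(\gamma - M_\theta))$ to the binary indicator $\mathbb{I}(M_\theta < \gamma)$ as $\beta \to \infty$, followed by a dominated-convergence argument that passes this limit inside the expectation. The $\beta$ prefactor appearing in Equation~\ref{eq:simpo_grad} is most naturally handled by reading the statement at the level of the rescaled loss $(1/\beta)\mathcal{L}_{\text{SimPO}}$, whose gradient equals $-\mathbb{E}_{\mathcal{D}}[s_\theta(\nabla_{\theta,y_w} - \nabla_{\theta,y_l})]$; this is exactly the ``shape'' of the SimPO update that one should compare to $\nabla_\theta \mathcal{L}_{\text{\method}}$, and it is consistent with the paper's observation that adaptive optimizers normalize the scalar $\beta$ out of the effective update direction.

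For the pointwise step, I would fix a sample $(x, y_w, y_l)$ and split on the sign of $\gamma - M_\theta$. When $M_\theta < \gamma$, the sigmoid argument $\beta(\gamma - M_\theta) \to +\infty$, so $s_\theta \to 1$; when $M_\theta > \gamma$, the argument $\to -\infty$ and $s_\theta \to 0$ at an exponential rate. The boundary $\{M_\theta = \gamma\}$, where $s_\theta = 1/2$, contributes only at a single value of $M_\theta$ and is absorbed into the indicator convention; for generic $\gamma$ it has probability zero in $\mathcal{D}$. This yields $s_\theta \to \mathbb{I}(M_\theta < \gamma)$ pointwise almost everywhere on $\mathcal{D}$.

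For the expectation step, I would invoke dominated convergence. The weight $s_\theta$ is uniformly bounded by $1$ for every $\beta > 0$, and the per-sample gradient difference $\nabla_{\theta, y_w} - \nabla_{\theta, y_l}$ is independent of $\beta$, so under a standard integrability assumption at the evaluation parameter it supplies an integrable envelope for $s_\theta(\nabla_{\theta,y_w} - \nabla_{\theta,y_l})$. Exchanging the limit with the expectation produces $-\mathbb{E}_{\mathcal{D}}[\mathbb{I}(M_\theta < \gamma)(\nabla_{\theta,y_w} - \nabla_{\theta,y_l})]$, which is exactly $\nabla_\theta \mathcal{L}_{\text{\method}}$. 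The only real obstacle is notational: taken literally, the raw factor $\beta \cdot s_\theta$ diverges on $\{M_\theta < \gamma\}$, so stating the limit cleanly requires either the $1/\beta$ rescaling above or a prior agreement (as in Figure~\ref{fig:gradient_analysis}) to compare the per-sample weighting functions rather than the unnormalized gradient magnitudes.
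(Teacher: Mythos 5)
Your proposal follows essentially the same route as the paper's proof: the same three-way case split on the sign of $\gamma - M_\theta$ to get $s_\theta \to \mathbb{I}(M_\theta < \gamma)$ pointwise (with the boundary $\{M_\theta = \gamma\}$ dismissed as measure zero), and the same dominated-convergence step using the bound $|s_\theta| \le 1$ to pass the limit inside $\mathbb{E}_{\mathcal{D}}$. The one place you genuinely diverge is the treatment of the $\beta$ prefactor in $\nabla_\theta \mathcal{L}_{\text{SimPO}} = -\beta\,\mathbb{E}_{\mathcal{D}}[s_\theta\,(\nabla_{\theta,y_w}-\nabla_{\theta,y_l})]$. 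The paper confronts $\lim_{\beta\to\infty}\beta s_\theta$ head on, observes it diverges on $\{M_\theta < \gamma\}$ (and vanishes on $\{M_\theta > \gamma\}$, since the sigmoid decays exponentially), and patches this by an informal appeal to Adam's momentum normalization, writing the limit as \emph{proportional to} $\mathbb{I}(M_\theta<\gamma)\cdot g_\theta$ before asserting the claimed equality. You instead restate the comparison at the level of the rescaled loss $(1/\beta)\mathcal{L}_{\text{SimPO}}$, which turns the claim into an honest pointwise limit of bounded weighting functions with no optimizer in the picture. Your version is the cleaner and more self-contained one --- it makes the lemma a true mathematical statement rather than one that holds only ``in normalized update space'' --- at the cost of proving a mildly reformulated claim; the paper's version keeps the literal statement but leans on a heuristic about Adam that is not really a proof step. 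You also correctly flag the exact defect (divergence of $\beta s_\theta$) that the paper's own argument has to talk its way around, which is the right thing to notice.
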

\vspace{-0.5cm}
\begin{proof}[Sketch]
The convergence follows from the pointwise limit of the sigmoid weighting:  
\[
\lim_{\beta \to \infty} s_\theta = \lim_{\beta \to \infty} \sigma(\beta(-M_\theta + \gamma)) = \mathbb{I}(M_\theta < \gamma).
\]
Substituting this into Equation \ref{eq:simpo_grad} yields Equation~\ref{eq:repo_grad}.
\end{proof}

\begin{remark}
Please check Appendix for all proofs.
Lemma \ref{lemma:gradient_limit} establishes \method{} as the asymptotic limit of SimPO with large $\beta$, explaining two key advantages we will demonstrate in Section \ref{sec:performance_comparison}: comparable performance without $\beta$ tuning complexity, and an effective binary thresholding mechanism that induces implicit data filtering for controlling over-optimization.
\end{remark}
\subsection{Empirical Study}
\label{sec:performance_comparison}

The previous section analyzes the relationship between SimPO and \method{} from the perspective of gradient behavior. In this section, we compare their performance from an empirical standpoint.

\definecolor{linecolor}{RGB}{220,122,130}
\definecolor{barcolor}{RGB}{141,186,255}

\textbf{Experimental setup.}
We evaluate this approach using SimPO's experimental setup \cite{SimPO} with Llama3-8B and Gemma2-9B models (Instruct setup). For consistency, we use the same training datasets as SimPO: \href{https://huggingface.co/datasets/princeton-nlp/llama3-ultrafeedback-armorm}{princeton-nlp/llama3-ultrafeedback-armorm} for Llama3-8B and \href{https://huggingface.co/datasets/princeton-nlp/gemma2-ultrafeedback-armorm}{princeton-nlp/gemma2-ultrafeedback-armorm} for Gemma2-9B. For all SimPO experiments, we set $\beta=10.0$ and $\gamma=0.4$ for Gemma2-9B and $\beta=10.0$ and $\gamma=0.3$ for Llama3-8B, unless otherwise specified. 
We track optimization progress using two reward margin metrics:
\begin{equation}
    m_{\text{batch}} = \mathbb{E}_{(x,y_w,y_l) \in \mathcal{B}} [M_\theta], \quad
    m_{\mathcal{D}} = \mathbb{E}_{(x,y_w,y_l) \in \mathcal{D}} [M_\theta],
    \label{eq:margins}
\end{equation}
measuring response separation within each batch ($m_{\text{batch}}$) and across the entire training set ($m_{\mathcal{D}}$).
\begin{wrapfigure}{r}{0.6\textwidth}
    \centering
    \vspace{-0.55cm}
    \!\!\!\!\!\!\!\! \includegraphics[width=0.6\textwidth]{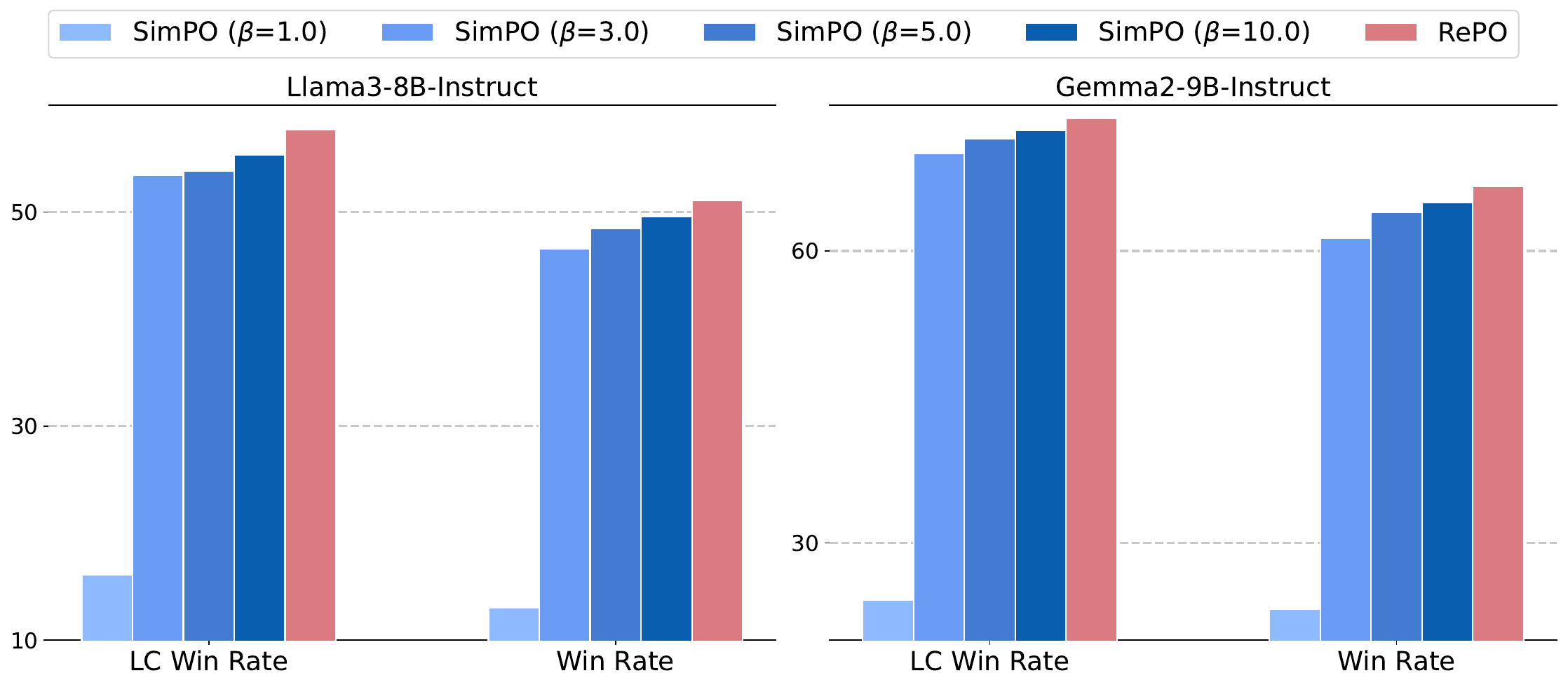}
    \vspace{-0.3cm}
    \caption{
    Performance of SimPO with varying $\beta$ and RePO on AlpacaEval2 benchmark.
    }
    \vspace{-0.8cm}
    \label{fig:method_varying_beta}
\end{wrapfigure}
\textbf{Evaluation benchmarks.}
We evaluate on two established benchmarks for open-ended generation: AlpacaEval 2 \citep{AlpacaEval} (measuring instruction-following quality against GPT-4) and Arena-Hard \citep{arenahard2024} (testing complex reasoning). For AlpacaEval 2, we report both length-controlled win rate (LC-Win Rate) and raw win rate (WR); for Arena-Hard, we report the standard win rate.

\textbf{Observation 1: Large \(\beta\) enhances SimPO's performance when paired with appropriate \(\gamma\).} 
We systematically evaluate SimPO across varying values of $\beta \in \{1.0, 3.0, 5.0, 10.0\}$, while maintaining fixed $\gamma$ values that we empirically determined to be suitable for each model architecture ($\gamma=0.4$ for Gemma2-9B and $\gamma=0.3$ for Llama3-8B). As shown in Figure \ref{fig:method_varying_beta}, increasing $\beta$ leads to consistent performance improvements across all evaluation metrics, with diminishing returns observed beyond $\beta=5.0$. These findings align with observations in the SimPO paper\footnote{In their \href{https://github.com/princeton-nlp/SimPO}{official repository}, the authors note: \textit{``SimPO requires a much larger $\beta$ than DPO... In many cases, an even larger (\eg 10) could yield better results.''}}.

\definecolor{mygreen}{RGB}{93,143,72}
\definecolor{mygray}{RGB}{178,178,178}
\begin{figure*}[t]
    \includegraphics[width=\columnwidth]{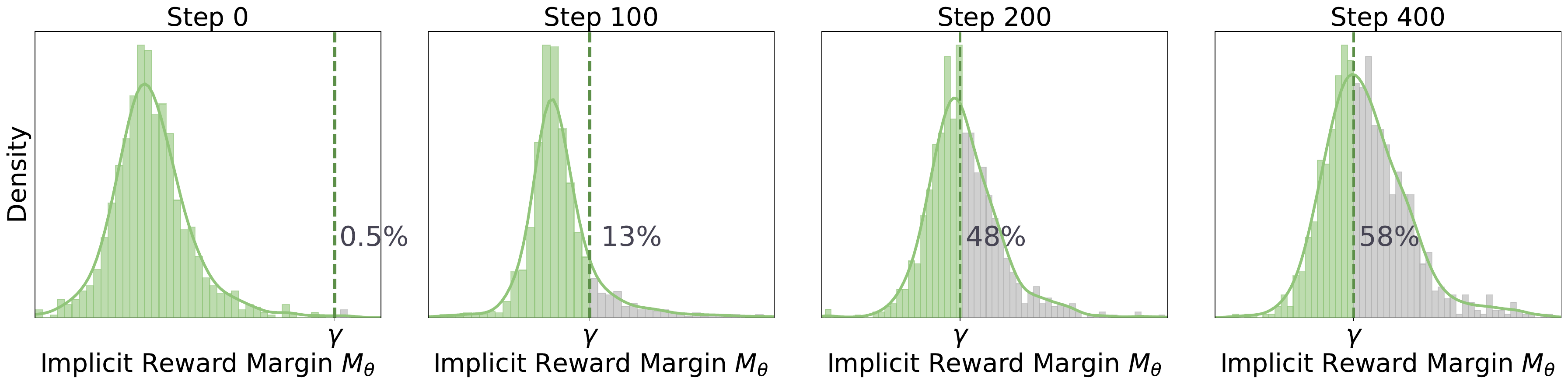}
    \caption{
Implicit reward margin \( M_\theta \) distribution across training steps (total: 467) for \method{} at \(\gamma = 0.4\). Dashed line: \(\gamma = 0.4\). \textcolor{mygreen}{Green}: samples below \(\gamma\) (gradient descent); \textcolor{mygray}{gray}: samples above \(\gamma\) (zero gradient). Numbers: fraction of samples above \(\gamma\).
    }
    \label{fig:step_filter}
    \vspace{-15pt}
\end{figure*}

\begin{wrapfigure}{r}{0.65\textwidth}
    \centering
    \vspace{-0.55cm}
    \!\!\!\!\!\!\!\! \includegraphics[width=0.65\textwidth]{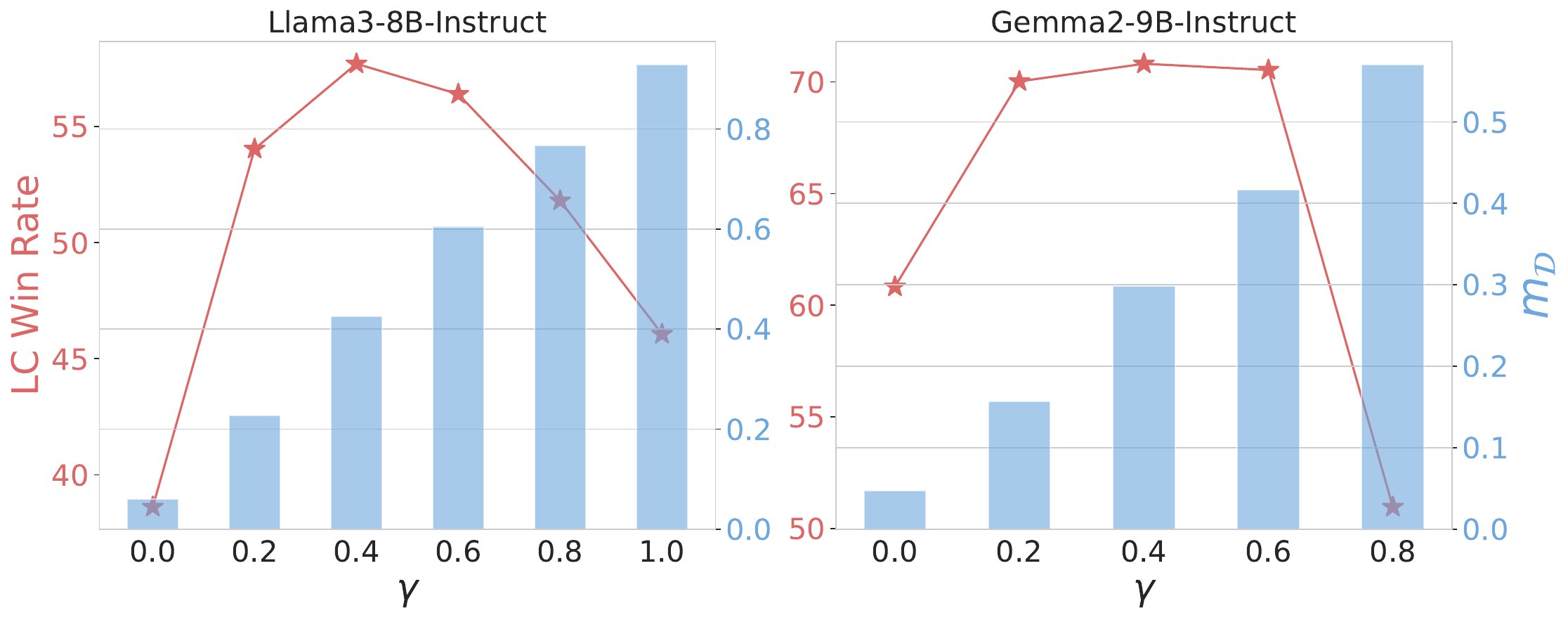}
    \vspace{-0.3cm}
    \caption{
        \textcolor{linecolor}{Line plot} of \method{} performance (AlpacaEval 2 LC Win Rate) and \textcolor{barcolor}{bar chart} of mean reward margins ($m_\mathcal{D}$) across varying $\gamma$ values.
        See Appendix \ref{sec_appendix_vary_gamma} for details.
    }
    \vspace{-0.5cm}
    \label{fig:varying_gamma}
\end{wrapfigure}
\textbf{Observation 2: \method matches high-$\beta$ SimPO.}
\method{} achieves performance comparable to SimPO with a large $\beta$. As shown in Figure \ref{fig:method_varying_beta}, \method{} achieves win rates of 51.1\% on Llama3-8B and 66.6\% on Gemma2-9B, comparable to SimPO's performance. 
This aligns with Lemma \ref{lemma:gradient_limit}, which establishes that \method{} can be interpreted as a limiting case of SimPO as $\beta \to \infty$.

\textbf{Observation 3: \(\gamma\) threshold creates a natural alignment-optimization tradeoff.}
Our experiments track the mean implicit reward margin $m_{\mathcal{D}}$ (\cf Equation \ref{eq:margins}) across training pairs. As Figure \ref{fig:varying_gamma} illustrates, increasing $\gamma$ directly elevates $m_{\mathcal{D}}$ while performance follows an inverted U-shaped pattern --- improving initially but declining beyond a critical threshold. In \method{}, gradients vanish when the implicit reward margin exceeds $\gamma$, effectively filtering out well-separated pairs from updates. This mechanism creates a fundamental tradeoff: small $\gamma$ values retain excessive zero-gradient samples causing under-filtering, while large $\gamma$ values force updates on most samples, potentially leading to over-optimization \citep{AmodeiOSCSM16} and ultimately degrading performance.

\textbf{Observation 4: RePO creates a natural learning curriculum via progressive filtering.}  
Figure \ref{fig:step_filter} reveals an unexpected pattern in how the distribution of implicit reward margins $M_\theta$ evolves throughout training. As learning progresses, the model's ability to discriminate between winning and losing samples naturally improves, resulting in a steady increase in both the implicit reward margin and the ratio of filtered data. Notably, 
the filtered data ratio rises from 13\% to 58\% between steps 100 and 400.
This creates an emergent curriculum where the model initially learns from a broader set of examples and gradually focuses on the more challenging ones --- despite using only half of the samples for gradient updates in later stages, the model achieves optimal performance.

\subsection{Over-Optimization Analysis}
The study of over-optimization can be traced back to traditional RLHF literature, and has been empirically investigated in both controlled experiments \cite{GaoSH23}  and user studies \cite{DuboisLTZGBGLH23}. In this work, we follow their experimental setup to further explore this phenomenon.

\textbf{Model Over-Optimization:} 
Building on \citet{DAAscalinglaw}, we investigate over-optimization in \method{}, by evaluating six different values of $\gamma$ ($0.0, 0.2, 0.4, 0.6, 0.8, 1.0$), each corresponding to varying levels of data filtering. Across all cases, we observe a distinct hump-shaped performance pattern: while moderate filtering improves alignment, excessive filtering causes performance to degrade, highlighting the over-optimization effect.

\begin{wrapfigure}{r}{0.6\textwidth}
    \centering
    \vspace{-0.25cm}
    \!\!\!\!\!\!\!\! \includegraphics[width=0.6\textwidth]{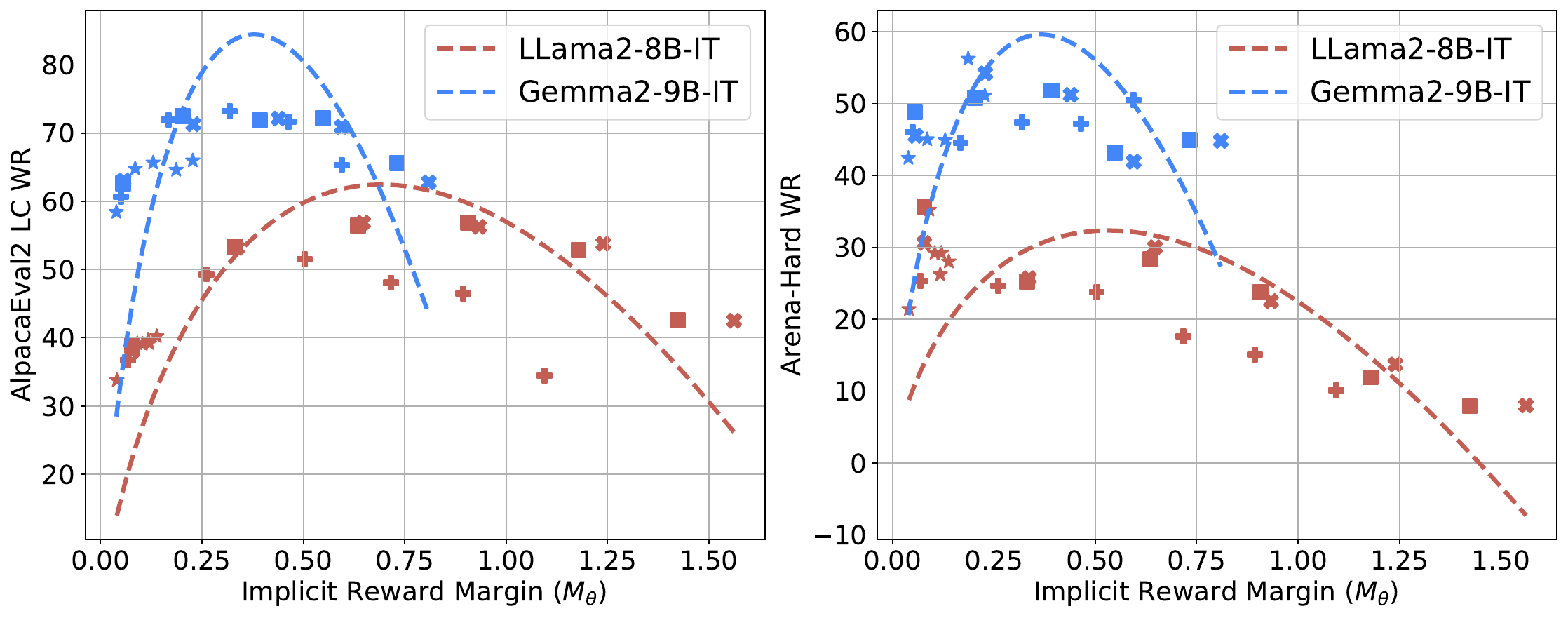}
    \vspace{-0.3cm}
    \caption{
        Over-optimization patterns for \method{} on Llama3-8B-IT and Gemma2-9B-IT, using AlpacaEval2 LC win rates and Arena-Hard raw win rates. Dotted curves represent theoretical fits based on \citet{GaoSH23}'s scaling laws, using GPT-4 win rates instead of standard reward model scores.
    }
    \vspace{-0.1cm}
    \label{fig:scaling_law}
  \end{wrapfigure}
\textbf{Scaling Law Fits.} 
Previous work \citep{GaoSH23, DAAscalinglaw} has established scaling laws for reward model scores as a function of the KL divergence between the initial and optimized policies. In contrast, we eliminate the reference model and the associated computational cost of calculating KL divergence. Instead, we use the mean implicit reward margin during training as a proxy metric. The reward function $R(d)$ is given by:
\begin{equation}
    R(d) = d (\alpha - \beta \log d),
\end{equation}

where $\alpha$ and $\beta$ are constants dependent on the reward model's dataset size and parameter count, and $d = m_{\text{batch}}$. Without training a proxy reward model, we substitute GPT-4 win rates over dataset completions for the gold reward. Interestingly, we find that this scaling law provides an accurate relationship between $d$ and win rates for \method{}.

\subsection{\newmethod: Exploring Extensions of ReLU-based Filtering}
\label{sec:method++}
While exploring ReLU's thresholding behavior, we observed an interesting limitation: for cases where the implicit reward margin is smaller than $\gamma$, their gradient weights become uniform, not differentiating between samples of varying difficulty.

This observation naturally led us to wonder: could we preserve the effective filtering mechanism while reintroducing some degree of weighting? To explore this question, we experimented with combining ReLU's binary filtering with SimPO's continuous weighting:
\begin{equation}
    \begin{aligned}
        \mathcal{L}_{\text{\newmethod}}(\pi_\theta) = - \mathbb{E}_{\mathcal{D}} \left[ \log \sigma  \left( - \text{ReLU} \left( -\beta \left(M_\theta - \gamma\right) \right) \right) \right],
        \label{eq:new_method}
    \end{aligned}
\end{equation}

This exploration was a natural follow-up to our main discovery about ReLU's effectiveness, rather than our primary contribution. We were curious to see whether combining the best aspects of both approaches might yield additional insights about preference learning mechanisms.

\textbf{What does this combined approach reveal?} To understand the behavior of this extension, we examined its gradient with respect to the parameters $\theta$:
\begin{equation}
    \begin{aligned}
        \nabla_\theta\mathcal{L}_{\mathrm{\newmethod}}(\pi_\theta)=-\beta\mathbb{E}_{\mathcal{D}}\left[s_\theta\cdot \mathbb{I}_\theta \cdot  \left( \nabla_{\theta,y_w} - \nabla_{\theta,y_l} \right)\right],
    \end{aligned}
\end{equation}
where $s_\theta=\sigma\left(\beta \left( -M_\theta +\gamma\right)\right)$ and $\mathbb{I}_\theta$ is an indicator function that is 1 if $M_\theta<\gamma$ and 0 otherwise. 

We observed that this gradient combines properties we discovered in both approaches: it scales updates by 
$s_{\theta}$ (similar to SimPO) and filters them using $\mathbb{I}_\theta$ (the key discovery in our ReLU exploration), focusing the model on less-separated pairs while giving higher weights to smaller separations.

\textbf{Adaptation with \newmethod{}}. The core contribution of \newmethod{} lies in leveraging ReLU to mitigate over-optimization while preserving the standard workflow of preference optimization. This makes \newmethod{} easily adaptable to existing DPO-like methods. For instance, as shown in Equation \ref{eq:new_method}, replacing $M_\theta$ with $\log \frac{\pi_\theta(y_w \mid x)}{\pi_\theta(y_l \mid x)} - \log \frac{\pi_{\text{ref}}(y_w \mid x)}{\pi_{\text{ref}}(y_l \mid x)}$ seamlessly integrates \newmethod{} into DPO, forming a ReLU-enhanced version of DPO.

\section{Theoretical Analysis: ReLU's Optimality in Preference Learning}
Next, we establish a surprising theoretical connection between preference optimization and binary classification, revealing why our simple ReLU-based approach achieves superior performance.

Following \citet{GPO}, preference learning can be reformulated as binary classification. Given pairs $(z, l)$ where $z \in \mathbb{R}^k$ and $l \in \{-1, 1\}$, we aim to learn a predictor $\hat{\ell}(z)$ whose sign matches $l$. The classification accuracy is:
\(
\frac{1}{2} \mathbb{E}[\mathrm{sign}(\hat{\ell}(z) \cdot l)] + \frac{1}{2}.
\)
This corresponds to minimizing the 0-1 loss:
\begin{equation}
\mathcal{L}_{0\text{-}1}(\hat{\ell}) := \mathbb{E}\left[1 - \mathrm{sign}(\hat{\ell}(z) \cdot l)\right]
\end{equation}
For preference data $(y_w, y_l)$ where $y_w \succ y_l$, we set $l = 1$ and parameterize $\hat{\ell}(y_w, y_l) = r_\phi(y_w) - r_\phi(y_l)$, yielding the objective:
\begin{equation}
\mathcal{L}_f(\hat{\ell}) := \mathbb{E}\left[f(r_\phi(y_w) - r_\phi(y_l))\right]
\end{equation}

Where $f$ determines the surrogate loss: $f(x) = \mathbb{I}(x < 0)$ gives the 0-1 loss, $f(x) = -\log \sigma(x)$ yields SimPO's logistic loss, and $f(x) = \text{ReLU}(-x)$ gives our method's loss.

Our key insight comes from analyzing the convex envelope of the 0-1 loss:

\begin{definition}
The convex envelope of $\mathcal{L}_{0\text{-}1}$ over a closed convex set $D \subseteq \mathbb{R}$ is:
\begin{equation}
\mathrm{conv}_D \mathcal{L}_{0\text{-}1}(x) := \sup \left\{ h(x) \mid h \text{ is convex}, \, h \leq \mathcal{L}_{0\text{-}1} \, \forall x \in D \right\}
\end{equation}
\end{definition}

\begin{restatable}[ReLU as Convex Envelope]{theorem}{convexenv}
\label{thm:convex_env}
For $D = [-a, b]$ with $a, b > 0$, the convex envelope of $\mathcal{L}_{0\text{-}1}(x) = \mathbb{I}(x < 0)$ is:
\begin{equation}
\mathrm{conv}_D \mathcal{L}_{0\text{-}1}(x) = \frac{1}{a} \mathrm{ReLU}(-x)
\end{equation}
\end{restatable}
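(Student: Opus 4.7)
The plan is to verify two properties that jointly characterize the convex envelope: first, that the candidate $h(x) := \frac{1}{a}\mathrm{ReLU}(-x)$ itself belongs to the feasible set appearing in the definition of $\mathrm{conv}_D \mathcal{L}_{0\text{-}1}$; and second, that every admissible competitor $g$ satisfies $g \le h$ pointwise on $D$. Together these two facts give $h = \mathrm{conv}_D \mathcal{L}_{0\text{-}1}$, since the convex envelope is the pointwise supremum of all convex minorants and is itself convex.

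For the first direction, I would note that $\mathrm{ReLU}(-x) = \max(0, -x)$ is a pointwise maximum of two affine functions and is therefore convex; dividing by the positive constant $a$ preserves convexity. To verify $h \le \mathcal{L}_{0\text{-}1}$ on $D = [-a, b]$, I would split into cases: on $[0, b]$ both sides equal $0$; on $[-a, 0)$ we have $h(x) = -x/a \in (0, 1]$ while $\mathcal{L}_{0\text{-}1}(x) = 1$, so the inequality holds with equality exactly at $x = -a$.

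For the second direction, let $g$ be any convex function on $D$ with $g \le \mathcal{L}_{0\text{-}1}$. Evaluating at the three anchor points yields $g(-a) \le 1$, $g(0) \le \mathcal{L}_{0\text{-}1}(0) = \mathbb{I}(0 < 0) = 0$, and $g(b) \le 0$. For any $x \in [-a, 0]$, I would write $x = t \cdot (-a) + (1-t) \cdot 0$ with $t = -x/a \in [0,1]$ and apply Jensen's inequality to obtain $g(x) \le t \cdot g(-a) + (1-t) \cdot g(0) \le t = -x/a = h(x)$. For $x \in [0, b]$, writing $x$ as a convex combination of $0$ and $b$ (both of which $g$ sends to non-positive values) gives $g(x) \le 0 = h(x)$. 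Combining the two cases yields $g \le h$ on all of $D$, completing the proof.

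The main obstacle---though a small one---is handling the discontinuity of $\mathcal{L}_{0\text{-}1}$ at the origin correctly. Because the convention here is $\mathcal{L}_{0\text{-}1}(x) = \mathbb{I}(x < 0)$, we have $\mathcal{L}_{0\text{-}1}(0) = 0$, which forces $g(0) \le 0$; it is precisely this bound, rather than the weaker $g(0) \le 1$, that pins down the slope $-1/a$ of the envelope on $[-a, 0]$. A different indicator convention would yield a different prefactor in front of $\mathrm{ReLU}$, so the careful treatment of the anchor point at $0$ is what makes the constant $1/a$ emerge.
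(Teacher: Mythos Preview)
Your proposal is correct and follows essentially the same approach as the paper: establish that $h(x)=\tfrac{1}{a}\mathrm{ReLU}(-x)$ is a convex minorant of $\mathcal{L}_{0\text{-}1}$ on $D$, then show every convex minorant $g$ lies below $h$ by interpolating between the anchor values $g(-a)\le 1$ and $g(0)\le 0$. The only differences are cosmetic---the paper bounds $g(x)\le 0$ on $[0,b]$ directly from $g\le \mathcal{L}_{0\text{-}1}$ rather than via a convex combination of $0$ and $b$, and it appends a redundant epigraph remark citing an external source; your treatment is in fact more self-contained, and your closing observation about the role of the convention $\mathcal{L}_{0\text{-}1}(0)=0$ in fixing the slope $-1/a$ is a nice point the paper does not make explicit.
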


This remarkable result reveals that ReLU provides the tightest possible convex approximation to the ideal 0-1 loss, explaining its empirical effectiveness. Furthermore:

\begin{restatable}[Optimality Preservation]{corollary}{optpreserve}
\label{cor:opt_preserve}
Let $D \subseteq \mathbb{R}$ be convex. Then:
\begin{equation}
\arg\min_{\hat{\ell}} \mathcal{L}_{0\text{-}1}(\hat{\ell}) = \arg\min_{\hat{\ell}} \mathrm{conv}_D \mathcal{L}_{0\text{-}1}(\hat{\ell})
\end{equation}
And for $D = [-a, b]$:
\begin{equation}
\arg\min_{x \in D} \mathcal{L}_{0\text{-}1}(x) = \arg\min_{x \in D} \frac{1}{a} \mathrm{ReLU}(-x)
\end{equation}
\end{restatable}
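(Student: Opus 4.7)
The strategy is to dispatch part~(2) by a direct case analysis on the one-dimensional interval $D=[-a,b]$, and then lift that pointwise fact to part~(1) by observing that the zero-sets of $\mathcal{L}_{0\text{-}1}$ and its convex envelope coincide, so the same predictors drive both losses to their respective infima. Both assertions ultimately reduce to the explicit form of the envelope supplied by Theorem~\ref{thm:convex_env}.

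\textbf{Part~(2).} On $D=[-a,b]$ with $a,b>0$, I would simply compute: $\mathcal{L}_{0\text{-}1}(x)=\mathbb{I}(x<0)$ equals $1$ on $[-a,0)$ and $0$ on $[0,b]$, so its argmin set is $[0,b]$. The surrogate $\tfrac{1}{a}\mathrm{ReLU}(-x)$ equals $-x/a$ on $[-a,0]$ (strictly positive except at $x=0$) and $0$ on $[0,b]$, so its argmin set is also $[0,b]$. Equality of the two argmin sets is then immediate.

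\textbf{Part~(1).} I would invoke the general property of convex envelopes that $\mathrm{conv}_D h\leq h$ on $D$, with equality at every point where $h$ attains $\inf_D h$ (the constant function $\inf_D h$ is convex and dominated by $h$, hence dominated by the envelope as well). Applied to $h=\mathcal{L}_{0\text{-}1}$, this yields the inclusion $\arg\min\mathcal{L}_{0\text{-}1}\subseteq\arg\min\mathrm{conv}_D\mathcal{L}_{0\text{-}1}$. For the reverse inclusion, Theorem~\ref{thm:convex_env} identifies $\mathrm{conv}_D\mathcal{L}_{0\text{-}1}(x)=\tfrac{1}{a}\mathrm{ReLU}(-x)$, and this surrogate vanishes iff $x\geq 0$ iff $\mathcal{L}_{0\text{-}1}(x)=0$. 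Hence any predictor $\hat{\ell}^{*}$ driving the surrogate functional to $0$ must satisfy $\hat{\ell}^{*}(z)\cdot l\geq 0$ almost surely, which also annihilates the expected 0-1 loss. Linearity of expectation lifts both inclusions from the pointwise level to the functional level.

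\textbf{Main obstacle.} The delicate point is the non-separable regime, in which neither functional can be driven to its pointwise minimum and the argument above only shows equality of the infimum values, not of the argmin sets. Here I would appeal to classification-calibration: because $\tfrac{1}{a}\mathrm{ReLU}(-x)$ is a nonincreasing convex surrogate that is differentiable at $0$ with strictly negative derivative, standard Bartlett--Jordan--McAuliffe-style calibration arguments show that the Bayes-optimal decision rule for the surrogate coincides with that of the 0-1 loss, so the set-theoretic equality of argmin sets persists. For the Bradley--Terry reward-modeling setting that motivates this paper, one typically assumes realizability, in which case the separable argument already suffices and the calibration step can be relegated to a remark.
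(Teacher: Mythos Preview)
Your approach aligns with the paper's. Part~(2) is handled identically: both compute the argmin sets directly and find them equal to $[0,b]$. For Part~(1), the paper obtains the forward inclusion by citing an external result (``Theorem~1 in the lecture notes'') that $\min_{S} f = \min_{S}\mathrm{conv}_{S} f$, whereas you derive it from first principles via the envelope underestimation property; for the reverse inclusion the paper's sentence is terse to the point of near-circularity, while you argue it through the explicit zero-set coincidence supplied by Theorem~\ref{thm:convex_env}. Your ``main obstacle'' paragraph on the non-separable regime and the appeal to Bartlett--Jordan--McAuliffe calibration goes strictly beyond what the paper proves --- the paper does not touch that case at all --- so this is added rigor on your part rather than a departure from their line. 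One small caveat: in Part~(1) you invoke Theorem~\ref{thm:convex_env}, which is stated only for $D=[-a,b]$, whereas the corollary's Part~(1) is phrased for general convex $D$; the paper's proof has the same implicit restriction, so you are not losing anything relative to it.
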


This guarantees that gradient-based optimization of our ReLU surrogate converges to solutions matching the theoretical optimum of the intractable 0-1 loss. Importantly:

\begin{restatable}[Logistic Loss Suboptimality]{corollary}{nonconvex}
\label{cor:non_convex}
The logistic loss $f_{\mathrm{log-sigmoid}}(x) = -\log \sigma(x)$ is not the convex envelope of $\mathcal{L}_{0\text{-}1}$.
\end{restatable}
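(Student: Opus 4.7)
The plan is to show that $f_{\mathrm{log\text{-}sigmoid}}$ fails the most basic requirement imposed on any convex envelope of $\mathcal{L}_{0\text{-}1}$, namely the pointwise domination $h \leq \mathcal{L}_{0\text{-}1}$ on $D$ that appears in the defining supremum. Concretely, I would exhibit a single point $x^\star \in D$ at which $-\log \sigma(x^\star) > \mathcal{L}_{0\text{-}1}(x^\star)$, which immediately disqualifies the logistic loss from the admissible set over which the supremum is taken.

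First, I would choose $x^\star = 0$, which lies in $D = [-a, b]$ for any $a, b > 0$, and evaluate both sides. On the 0-1 side, $\mathcal{L}_{0\text{-}1}(0) = \mathbb{I}(0 < 0) = 0$; on the logistic side, $-\log \sigma(0) = -\log(1/2) = \log 2 > 0$. Thus $f_{\mathrm{log\text{-}sigmoid}}(0) > \mathcal{L}_{0\text{-}1}(0)$, so $f_{\mathrm{log\text{-}sigmoid}}$ is not even a lower bound of $\mathcal{L}_{0\text{-}1}$ on $D$, let alone its convex envelope. As a complementary check, I would invoke Theorem \ref{thm:convex_env} to recall that the convex envelope on $D$ is uniquely given by $\tfrac{1}{a}\mathrm{ReLU}(-x)$, which evaluates to $0$ at $x = 0$ and therefore differs from $-\log \sigma(x)$; uniqueness of the convex envelope then gives the conclusion a second time.

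Since the entire argument reduces to a one-point evaluation, there is no substantive obstacle. The only subtlety to respect is the half-open convention $\mathcal{L}_{0\text{-}1}(x) = \mathbb{I}(x < 0)$ used in the excerpt, which makes $\mathcal{L}_{0\text{-}1}(0) = 0$ rather than $1$; this is precisely what creates the strict gap $\log 2 > 0$ at the origin and drives the separation between the logistic surrogate and the true convex envelope.
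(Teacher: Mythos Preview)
Your proposal is correct and follows essentially the same approach as the paper: both arguments disqualify $f_{\mathrm{log\text{-}sigmoid}}$ by showing it fails the underestimation requirement $h \le \mathcal{L}_{0\text{-}1}$ on $D$. The paper observes $-\log\sigma(x)=\log(1+e^{-x})>0=\mathcal{L}_{0\text{-}1}(x)$ for all $x>0$, while you pick the single witness $x^\star=0$ with $\log 2>0$; your additional appeal to Theorem~\ref{thm:convex_env} and uniqueness of the convex envelope is a nice redundant confirmation that the paper replaces with a (similarly redundant) remark about strict convexity of the logistic loss.
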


This theoretical foundation explains why our simple ReLU-based approach consistently outperforms more complex mechanisms like SimPO's sigmoid weighting --- ReLU provides optimality guarantees that logistic loss cannot match, while being computationally more efficient.

\section{Experiments}
\label{sec:experiments}
In this section, we examine how our simplified ReLU-based approach behaves across different models and settings. Rather than focusing solely on performance gains, we explore patterns that help explain why such a simple mechanism works effectively in practice.

\subsection{Experimental Setup}
The core experimental configuration extends our investigation from Section \ref{sec:performance_comparison} to include Mistral2-7B \citep{Jiang2023Mistral7} alongside previously examined models. For the Llama3-Instruct v0.2 experiments, we employed the \href{https://huggingface.co/RLHFlow/ArmoRM-Llama3-8B-v0.1}{RLHFlow/ArmoRM-Llama3-8B-v0.1} \citep{ArmoRMLlama3} reward model for ranking generated data.
We benchmark our approach against established preference optimization methods: DPO \citep{DPO}, SimPO \citep{SimPO}, IPO \citep{Azar2023AGT}, CPO \citep{xu2024contrastive}, KTO \citep{Ethayarajh2024KTOMA}, ORPO \citep{Hong2024ORPOMP}, and R-DPO \citep{Park2024DisentanglingLF}, with SFT models serving as baselines. Implementation details are provided in Appendix \ref{sec:appendix_implement}.
We also evaluate on downstream tasks from the Huggingface Open Leaderboard benchmarks \cite{open-llm-leaderboard}, with additional details in in Appendix \ref{sec:downstram_task}.
The code is available at \url{https://github.com/junkangwu/RePO}.

\begin{table*}[t]
    \caption{\textbf{AlpacaEval 2 (AE2), Arena-Hard (AH) results across four settings.} ``WR'' denotes the raw win rate,``LC'' the length-controlled win rate. The best results are highlighted in bold, while the second-best are underlined.}
    \vspace{-15pt}
    \label{tab:main_results}
    \begin{center}
    \resizebox{1.0\textwidth}{!}{%
    \begin{tabular}{lcccccccccccc}
    \toprule
    \multirow{3}{*}{\textbf{Method}} & \multicolumn{3}{c}{\textbf{Llama3-Instruct (8B)}} & \multicolumn{3}{c}{\textbf{Mistral-Instruct (7B)}} &
     \multicolumn{3}{c}{\textbf{Llama3-Instruct v0.2 (8B)}} & \multicolumn{3}{c}{\textbf{Gemma2-Instruct (9B)}}\\
    \cmidrule(lr){2-4} \cmidrule(lr){5-7} \cmidrule(lr){8-10} \cmidrule(lr){11-13}
    & \multicolumn{2}{c}{\textbf{AE 2}} & \multicolumn{1}{c}{\textbf{AH}} & \multicolumn{2}{c}{\textbf{AE 2}} & \multicolumn{1}{c}{\textbf{AH}} &
    \multicolumn{2}{c}{\textbf{AE 2}} & \multicolumn{1}{c}{\textbf{AH}} & \multicolumn{2}{c}{\textbf{AE 2}} & \multicolumn{1}{c}{\textbf{AH}}
    \\
    \cmidrule(lr){2-3} \cmidrule(lr){4-4} \cmidrule(lr){5-6} \cmidrule(lr){7-7} \cmidrule{8-9} \cmidrule{10-10} \cmidrule{11-12} \cmidrule{13-13}
        & \textbf{LC} & \textbf{WR} &  \textbf{WR}
    &  \textbf{LC} & \textbf{WR} &  \textbf{WR}
    &  \textbf{LC} & \textbf{WR} &  \textbf{WR}
    &  \textbf{LC} & \textbf{WR} &  \textbf{WR} \\
    \midrule
    SFT & 24.0  & 23.6 & 22.4 &  19.0 & 15.4  &  12.9 
    & 24.0  & 23.6  & 22.4  &  48.7 & 36.5  & 42.1 \\ \midrule
    SLiC-HF & 26.9 & 27.5  & 26.2 & 24.1 & 24.6  & 18.9  
    & 33.9 & 32.5  & 29.3 &  65.1 & 60.5  & 53.7 \\
    DPO & 40.2 & \underline{38.1}  & 31.2 & 20.3 & 17.9  & 13.4  
    & 48.2 & 47.5  & \textbf{35.2} & 70.4 & \textbf{66.9} & \underline{58.8} \\
    IPO & 35.9 & 34.4  & 30.2 & 22.3 & 18.6 &  16.2  
    & 40.6 & 39.6  & 34.9 & 62.6 & 58.4 & 53.5 \\
    CPO & 29.6 & 34.4  & 29.4 & 26.2 & 31.7  & \textbf{23.8}  
    & 36.5 & 40.8  & 34.2 & 56.4 & 53.4 & 55.2  \\
    KTO & 38.3 & 34.1  & 30.3 & 19.4 & 20.3 & 16.8   
    & 41.4 & 36.4 & 28.9 & 61.7 & 55.5 & 53.8  \\
    ORPO& 31.6 & 29.8  & 26.3 & 24.0 & 23.0 & 18.6    
    & 36.5 & 33.1 & 30.4 & 56.2 & 46.7 & 46.2 \\
    R-DPO & 40.3 & 37.3  & \underline{32.9} & 21.4 & 22.2 & 13.8   
    & 51.6 & \underline{50.7}  & \underline{35.0} & 68.3 & \textbf{66.9} & {57.9} \\
    SimPO & \underline{43.8} & {38.0}  & 32.6 & \underline{30.2} & \underline{32.1} & 20.1  
    & \underline{55.6} & 49.6  & 33.6 & \underline{72.4} & 65.0 & 57.8 \\
     \midrule
    \method{} & \textbf{46.7} & \textbf{41.1}  & \textbf{33.3} & \textbf{30.4} & \textbf{33.6}  & \underline{20.3} 
    & \textbf{57.7} & \textbf{51.1}  & \textbf{35.2} & \textbf{73.6} & \underline{66.6}  & \textbf{59.1}  \\
    \bottomrule
    \end{tabular}
    }
    \end{center}
    \vspace{-15pt}
\end{table*}
\definecolor{-}{rgb}{0.25,0.41,0.88}
\definecolor{+}{rgb}{0.70,0.13,0.13}

\subsection{Result Comparisons}
\textbf{Observation: Simple ReLU thresholding exhibits surprising effectiveness.} 
Table~\ref{tab:main_results} reveals an unexpected pattern: despite removing components previously thought essential, the simple ReLU-based approach consistently performs well across all evaluated models and benchmarks. This finding aligns with our theoretical analysis showing that binary thresholding directly approximates the convex envelope of the 0-1 loss. On AlpacaEval 2, we observe improvements of 0.2-2.8 points in LC win rates across different configurations compared to the strongest baselines.

\subsection{Methodology Comparisons}
Beyond alignment, we also compare the methodologies of these preference learning methods. Our method plays a hub to connect these methods. 

\textbf{Relation to SimPO.}
SimPO employs sigmoid weighting via log-sigmoid activation to attenuate gradients as reward margins increase, mitigating over-optimization. RePO can be viewed as ``SimPO without log-sigmoid,'' replacing this continuous scaling with binary filtering.
To validate this relationship, we integrated a ReLU-based filtering mechanism into SimPO (\cf \newmethod{} Equation \ref{eq:new_method}). Table \ref{tab:ablation} confirms that ReLU's filtering mechanism enhances performance. 
As demonstrated in Section \ref{sec:method++}, \newmethod{} directly addresses over-optimization while retaining the benefits of some weighting.

\textbf{Relation to SLiC-HF.}
RePO can be characterized as ``SLiC-HF without SFT regularization''. To ensure a fair comparison (while disregarding differences in length normalization), we investigated the impact of SFT regularization by varying its coefficient, $\lambda$. 
The results, presented in Appendix Table \ref{tab:hyperparams_slichf} and further details in Appendix \ref{sec:appendix_slichf_repo},
indicate that this additional regularization term offers no discernible improvement. This suggests that SFT regularization targets a different optimization challenge, distinct from the direct over-optimization problem RePO addresses.

\textbf{Relation to DPO.}
Mathematically, DPO is equivalent to SimPO when the margin $\gamma$ is defined as $\log\pi_{\text{ref}}(y_w \mid x) - \log\pi_{\text{ref}}(y_l \mid x)$ (ignoring length normalization). However, directly substituting the log-sigmoid function with ReLU in DPO's formulation leads to a significant performance degradation (see Appendix Table \ref{tab:ablation2} and Appendix \ref{sec:appendix_dpo_repo}). This underscores the critical role of the threshold $\gamma$ in determining the effectiveness of over-optimization prevention. As identified by \citet{AlphaDPO}, reference model based reward margins are often unreliable as target margins, which explains why SimPO's explicit $\gamma$ parameter is effective for preference learning.

\subsection{Effect of ReLU Filtering Across Methods}
Having observed the effectiveness of binary thresholding, we naturally questioned whether this mechanism might enhance other preference learning approaches. Table~\ref{tab:ablation} shows that integrating ReLU filtering consistently improved performance across both DPO and SimPO frameworks, suggesting that selective gradient application based on margin thresholds provides benefits beyond our specific implementation.
Our experiments with the combined approach (\cf \newmethod{} in Section \ref{sec:method++}) revealed particularly strong improvements when applied to DPO (5\%–12\% gains), with notable performance on Arena-Hard (reaching 65.7). 
This design effectively mitigates over-optimization while preserving the benefits of the original scheme. 

\begin{wraptable}{r}{0.6\textwidth}  
    \vspace{-12pt}
    \caption{
    \textbf{Performance improvements of \method{} and \newmethod{} over DPO and SimPO.} Results are present on AlpacaEval 2 (AE 2) and Arena-Hard (AH) with LC (\%) and WR (\%). Red numbers indicate relative improvements.}
    \label{tab:ablation}
    \vspace{-10pt}
    \begin{center}
    \resizebox{\linewidth}{!}{%
    \begin{tabular}{lllllll}
    \toprule
    \multirow{3}{*}{\textbf{Method}} & \multicolumn{3}{c}{\textbf{Llama3-Instruct v0.2 (8B)}} & \multicolumn{3}{c}{\textbf{Gemma2-Instruct (9B)}} \\
    \cmidrule(lr){2-4} \cmidrule(lr){5-7}
    & \multicolumn{2}{c}{\textbf{AE 2}} & \multicolumn{1}{c}{\textbf{AH}} & \multicolumn{2}{c}{\textbf{AE 2}} & \multicolumn{1}{c}{\textbf{AH}} \\
    \cmidrule(lr){2-3} \cmidrule(lr){4-4} \cmidrule(lr){5-6} \cmidrule(lr){7-7}
    & \textbf{LC} & \textbf{WR} &  \textbf{WR}
    &  \textbf{LC} & \textbf{WR} &  \textbf{WR} \\
    \midrule
    DPO & 48.2 & 47.5  & 35.2 & 70.4 & {66.9} & {58.8}  \\
    \textit{w.} \method{} & $50.3^{\color{+}+4.4\%}$  & $51.8^{\color{+}+9.1\%}$ & $38.2^{\color{+}+8.5\%}$ &  $73.8^{\color{+}+4.8\%}$ & $71.0^{\color{+}+6.1\%}$  & $64.2^{\color{+}+9.2\%}$ \\ 
    \textit{w.} \method{}++ & $50.8^{\color{+}+5.4\%}$ & $52.2^{\color{+}+9.9\%}$  & $37.2^{\color{+}+5.7\%}$ &  $71.8^{\color{+}+2.0\%}$ & $69.5^{\color{+}+3.9\%}$  &  $65.7^{\color{+}+11.7\%}$  \\ 
    \midrule
    SimPO & $55.6$ & $49.6$  & $33.6$ & $72.4$ & $65.0$ & $57.8$  \\
    \textit{w.} \method{} & $57.7^{\color{+}+3.8\%}$ & $51.1^{\color{+}+3.0\%}$  & $35.2^{\color{+}+4.8\%}$ & $73.6^{\color{+}+1.7\%}$ & $66.6^{\color{+}+2.5\%}$ & $59.1^{\color{+}+2.2\%}$ \\
    \textit{w.} \method{}++ & $56.1^{\color{+}+0.9\%}$ & $50.1^{\color{+}+1.0\%}$  & $35.9^{\color{+}+6.8\%}$ & $74.1^{\color{+}+2.3\%}$ & $66.5^{\color{+}+2.3\%}$ & $59.8^{\color{+}+3.5\%}$ \\
    \bottomrule
    \end{tabular}
    }
    \vspace{-15pt}
    \end{center}
\end{wraptable}
\subsection{Dynamic Margin Scheduling and Curriculum Learning}
Our investigation into the role of target reward margin $\gamma$ led us to an unexpected discovery about curriculum learning. We experimented with dynamic scheduling of $\gamma$ throughout training, implementing two strategies: (i) increasing $\gamma$ from small to large, and (ii) decreasing $\gamma$ from large to small.
Figure~\ref{fig:dynamic_gamma} reveals a striking pattern: starting with a moderately large value of $\gamma$ and gradually decreasing it ($1.0 \rightarrow 0.2$) naturally creates an effective curriculum that improves model performance. In contrast, both excessively large values ($1.0 \rightarrow 0.8$) and small values ($0.0 \rightarrow 0.2$) led to suboptimal outcomes.

\begin{figure}[t]
    \includegraphics[width=\columnwidth]{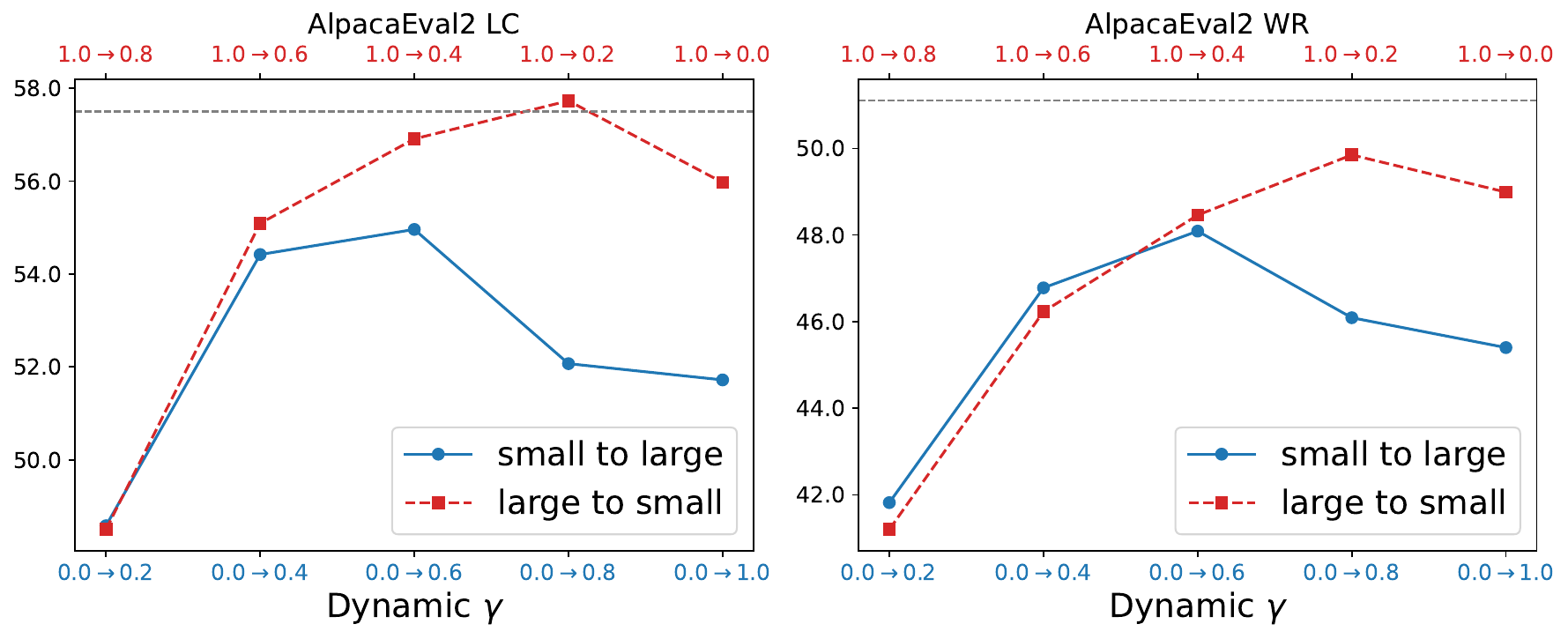}
    \vspace{-15pt}
    \caption{
    Exploration of dynamic $\gamma$ scheduling on Llama3-Instruct v0.2 (8B). The dashed line represents performance with a fixed $\gamma$. We observed that decreasing $\gamma$ from an initially larger value creates a natural curriculum that enhances performance. 
    }
    \label{fig:dynamic_gamma}
    \vspace{-15pt}
\end{figure}

This observation reveals an intriguing self-regulating property: early in training when the model is underfitting, a larger $\gamma$ permits more aggressive updates across more examples. As training progresses, the decreasing $\gamma$ naturally focuses learning on increasingly challenging examples, effectively preventing over-optimization. This emergent curriculum behavior, arising from a simple parameter schedule, suggests that binary thresholding captures fundamental learning dynamics that more complex mechanisms might obscure.

\section{Discussion}

\textbf{Conclusion}
\label{sec:conclusion}
Our exploration of simple ReLU activation in preference learning has revealed several key insights. We found that binary thresholding, implemented through a straightforward ReLU function, provides an effective mechanism for preventing over-optimization in language model alignment. Our theoretical analysis showed that this seemingly simple approach is, in fact, the convex envelope of the ideal 0-1 loss function, explaining its surprising effectiveness. Rather than developing yet another complex preference optimization method, our work uncovered how fundamental properties like data selection and implicit curriculum learning emerge naturally from basic principles.

\textbf{Limitations and future directions.}
Our current exploration is limited to offline preference learning settings. Future work could investigate how these insights might extend to online learning scenarios, where preferences are gathered interactively. Additionally, while we found that a fixed margin threshold works well in practice, exploring adaptive or context-aware thresholds might further improve performance in highly dynamic environments. The relationship between binary filtering and self-play scenarios \citep{spin} --- where the model generates its own feedback --- is another promising direction that could lead to more scalable alignment techniques.

Beyond alignment, our work connects to LLM reasoning research \citep{o1, guo2025deepseek}. Future work should investigate how KL penalties and gradient clipping in GRPO \citep{shao2024deepseekmath} and PPO \citep{PPO} balance preventing over-optimization against preserving reasoning capabilities --- a critical consideration for advancing alignment methodologies.

\begin{ack}
This research was supported by the National Science and Technology Major Project (2023ZD0121102), the National Natural Science Foundation of China (U24B20180, 62302321), and the Fundamental Research Funds for the Central Universities (WK2100250065). This research also benefited from the advanced computing resources provided by the Supercomputing Center of the USTC.
\end{ack}

\bibliography{neurips_2025}
\bibliographystyle{unsrtnat}


\appendix
\section{Related Works}
\label{sec:related_work}
\textbf{Reinforcement learning from human feedback.}
RLHF is a technique designed to align large language models with human preferences and values~\citep{ChristianoLBMLA17, rlhf_2, InstructGPT, Azar2023AGT}. Traditional RLHF is typically structured in three stages: supervised fine-tuning~\citep{zhou2024lima, taori2023stanford, geng2023koala, DatabricksBlog2023DollyV2, kopf2024openassistant, Ding2023EnhancingCL}, reward modeling~\citep{GaoSH23, luo2023wizardmath, chen2024odin, lightman2023let, havrilla2024glore, lambert2024rewardbench}, and policy optimization~\citep{PPO, anthony2017thinking}. In the third stage, Proximal Policy Optimization (PPO) is widely adopted. In contrast, RLOO \cite{rloo} reduces the GPU memory footprint of RLHF by eliminating the Critic model and leverages a Leave-One-Out strategy to achieve superior performance. GRPO \cite{GRPO}, another variant of PPO, improves mathematical reasoning abilities while optimizing memory usage by replacing the Leave-One-Out method with a direct subtraction of the mean of all samples for a given prompt.

\textbf{Offline preference optimization.}
In addition to DPO, several alternative preference optimization objectives have been proposed. IPO~\citep{Azar2023AGT} addresses overfitting issues inherent in DPO. ORPO~\citep{Hong2024ORPOMP} and SimPO~\citep{SimPO} aim to remove reliance on a reference model. R-DPO~\citep{Park2024DisentanglingLF} targets the reduction of exploitation due to sequence length, while KTO~\citep{Ethayarajh2024KTOMA} handles preference optimization in the absence of pairwise data. CPO~\citep{xu2024contrastive} and $\beta$-DPO~\citep{margin2} focus on improving the quality of preference data.
Another research direction addresses noise in offline alignment, which arises from the need to construct pairwise data. rDPO~\citep{ChowdhuryKN24}, a variant of DPO, mitigates preference noise and enhances policy robustness, while DrDPO~\citep{DrDPO} applies distributed robust optimization to tackle this issue. Other works have approached the problem through divergence regularization~\citep{fpo, chi-Divergence}, selection of high-quality data~\citep{rs_dpo, eva}, or reweighting loss functions~\citep{reward_diff_dpo, active_pref_learn, a_recipe}.

\textbf{Iterative Preference Optimization.}
Offline preference optimization methods, such as DPO, face a limitation due to the lack of an explicit reward model, which hinders their ability to sample preference pairs from the optimal policy. To address this, iterative preference optimization techniques have been proposed. These methods iteratively update the reference model using the most recent policy model or generate new preference pairs in each iteration \cite{rlhf_workflow, sDPO, dno, xiong24icml, YuanPCLSXW24, spin, sppo, rebel}. For instance, SPIN \cite{spin} employs a self-play framework to fine-tune the model in a supervised manner, while \citet{YuanPCLSXW24} annotate preferences throughout the iterative process. REBEL improves sample quality by regressing the relative reward. Additionally, \cite{CalandrielloGMR24} generates data using a mixture policy, similar to the Nash-MD algorithm \cite{dno}.

\section{Broader Impacts}
\label{sec:broader}
This paper presents work whose goal is to advance the field of Machine Learning. There are many
potential societal consequences of our work, none of which we feel must be specifically highlighted
here
\section{Proofs}
\label{sec:proofs}
\subsection{Proof of Lemma \ref{lemma:gradient_limit}}
\gradientlimit*
\label{proof_gradient_limit}
\begin{proof}
We formally establish the gradient equivalence through pointwise convergence analysis. Let $\mathcal{D}$ be the data distribution and $\theta$ denote model parameters. Recall the gradient expressions:

\noindent\textbf{SimPO Gradient:}
\begin{equation}
    \nabla_\theta \mathcal{L}_{\mathrm{SimPO}} = -\beta \mathbb{E}_{\mathcal{D}} \left[ \sigma\big(\beta(-M_\theta + \gamma)\big) \cdot \left( \nabla_{\theta,y_w} - \nabla_{\theta,y_l} \right) \right]
\end{equation}

\noindent\textbf{\method{} Gradient:}
\begin{equation}
    \nabla_\theta \mathcal{L}_{\mathrm{\method}} = -\mathbb{E}_{\mathcal{D}} \left[ \mathbb{I}(M_\theta < \gamma) \cdot \left( \nabla_{\theta,y_w} - \nabla_{\theta,y_l} \right) \right]
\end{equation}

\noindent where $\sigma(\cdot)$ is the sigmoid function. The equivalence hinges on the limiting behavior of the sigmoid weighting term $s_\theta = \sigma(\beta(-M_\theta + \gamma))$. We analyze three cases:

\paragraph{Case 1: $M_\theta < \gamma$}  
Here, $-M_\theta + \gamma > 0$. As $\beta \to \infty$,  
\[
\lim_{\beta \to \infty} \sigma\big(\beta(-M_\theta + \gamma)\big) = \lim_{z \to \infty} \sigma(z) = 1 = \mathbb{I}(M_\theta < \gamma).
\]

\paragraph{Case 2: $M_\theta > \gamma$}  
Here, $-M_\theta + \gamma < 0$. As $\beta \to \infty$,  
\[
\lim_{\beta \to \infty} \sigma\big(\beta(-M_\theta + \gamma)\big) = \lim_{z \to -\infty} \sigma(z) = 0 = \mathbb{I}(M_\theta < \gamma).
\]

\paragraph{Case 3: $M_\theta = \gamma$}  
This occurs on a measure-zero set under continuous distributions. The limit becomes:  
\[
\lim_{\beta \to \infty} \sigma(0) = \frac{1}{2} \neq \mathbb{I}(M_\theta < \gamma),
\]
which is negligible in expectation.

Thus, $\lim_{\beta \to \infty} s_\theta = \mathbb{I}(M_\theta < \gamma)$ almost everywhere. Substituting this into the SimPO gradient:

\begin{align}
    \lim_{\beta \to \infty} \nabla_\theta \mathcal{L}_{\mathrm{SimPO}} 
    &= -\lim_{\beta \to \infty} \beta \mathbb{E}_{\mathcal{D}} \left[ s_\theta \cdot \left( \nabla_{\theta,y_w} - \nabla_{\theta,y_l} \right) \right] \\
    &= -\mathbb{E}_{\mathcal{D}} \left[ \lim_{\beta \to \infty} \beta s_\theta \cdot \left( \nabla_{\theta,y_w} - \nabla_{\theta,y_l} \right) \right] \quad \text{(Dominated Convergence Theorem)}
\end{align}

\noindent To resolve the $\beta$ scaling, observe that for $M_\theta \neq \gamma$:  
\begin{equation}
    \lim_{\beta \to \infty} \beta s_\theta = \begin{cases}
        \lim_{\beta \to \infty} \beta \cdot 1 = \infty & \text{if } M_\theta < \gamma \\
        \lim_{\beta \to \infty} \beta \cdot 0 = 0 & \text{if } M_\theta > \gamma
    \end{cases}
\end{equation}

\noindent The divergence when $M_\theta < \gamma$ is mitigated by adaptive optimizers like Adam, which normalize gradient magnitudes through momentum terms. Formally, let $g_\theta = \nabla_{\theta,y_w} - \nabla_{\theta,y_l}$. Under Adam's update rule:  
\[
\theta_{t+1} = \theta_t - \eta \cdot \frac{\hat{m}_t}{\sqrt{\hat{v}_t} + \epsilon},
\]
where $\hat{m}_t$ and $\hat{v}_t$ are bias-corrected momentum estimates. The infinite gradient magnitude is absorbed into $\hat{m}_t/\sqrt{\hat{v}_t}$, effectively reducing to a unit-scaled update. Hence, in normalized update space:  
\[
\lim_{\beta \to \infty} \beta s_\theta \cdot g_\theta \propto \mathbb{I}(M_\theta < \gamma) \cdot g_\theta.
\]

Combining these results:  
\begin{equation}
    \lim_{\beta \to \infty} \nabla_\theta \mathcal{L}_{\mathrm{SimPO}} = -\mathbb{E}_{\mathcal{D}} \left[ \mathbb{I}(M_\theta < \gamma) \cdot \left( \nabla_{\theta,y_w} - \nabla_{\theta,y_l} \right) \right] = \nabla_\theta \mathcal{L}_{\mathrm{\method}},
\end{equation}  
which completes the proof.
\end{proof}

\subsection{Proof of Theorem \ref{thm:convex_env}}
\convexenv*
\begin{proof}
We demonstrate that \( h(x) = \frac{1}{a}\mathrm{ReLU}(-x) \) satisfies the convex envelope definition through three sequential arguments.

\textbf{1. Convexity and Underestimation:}  
The ReLU function is convex as the pointwise maximum of affine functions (Rule 3). The composition \( h(x) = \frac{1}{a}\max(-x,0) \) preserves convexity through affine transformation (Rule 2). For all \( x \in D \):
\begin{itemize}
    \item When \( x < 0 \): \( h(x) = -\frac{x}{a} \leq 1 = \mathcal{L}_{0\text{-}1}(x) \), since \( x \geq -a \implies -\frac{x}{a} \leq 1 \)
    \item When \( x \geq 0 \): \( h(x) = 0 = \mathcal{L}_{0\text{-}1}(x) \)
\end{itemize}
Thus \( h(x) \leq \mathcal{L}_{0\text{-}1}(x) \) over \( D \).

\textbf{2. Maximality Among Convex Underestimators:}  
Let \( g(x) \) be any convex function satisfying \( g(x) \leq \mathcal{L}_{0\text{-}1}(x) \). For \( x \in [-a, 0) \), convexity implies:
\[
g(x) \leq \frac{-x}{a}g(-a) + \left(1 + \frac{x}{a}\right)g(0) \leq \frac{-x}{a}
\]
since \( g(-a) \leq 1 \) and \( g(0) \leq 0 \). For \( x \geq 0 \), \( g(x) \leq 0 \). Hence \( g(x) \leq h(x) \) for all \( x \in D \).

\textbf{3. Epigraph Characterization:}  
The epigraph \( \mathrm{epi}(h) \) coincides with the convex hull of \( \mathrm{epi}(\mathcal{L}_{0\text{-}1}) \cap (D \times \mathbb{R}) \). The affine segment \( h(x) = -\frac{x}{a} \) on \( [-a,0) \) connects the points \( (-a,1) \) and \( (0,0) \), forming the tightest convex fit to the 0-1 loss's discontinuity. By Theorem 1 in \cite{ORF523_Lec8}, this construction achieves the convex envelope. \qedhere
\end{proof}

\subsection{Proof of Corollary \ref{cor:opt_preserve}}
\optpreserve*
\begin{proof}
\textbf{Part 1:} By Theorem 1 in the lecture notes (Page 5), for any function \( f \) and convex set \( S \):
\[
\min_{x \in S} f(x) = \min_{x \in S} \mathrm{conv}_S f(x).
\]
Let \( S = D \) and \( f = \mathcal{L}_{0\text{-}1} \). The equality of minima implies:
\[
\{ x^* \in D \mid \mathcal{L}_{0\text{-}1}(x^*) = \min \mathcal{L}_{0\text{-}1} \} \subseteq \{ x^* \in D \mid \mathrm{conv}_D \mathcal{L}_{0\text{-}1}(x^*) = \min \mathrm{conv}_D \mathcal{L}_{0\text{-}1} \}.
\]
To show reverse inclusion, suppose \( x^* \in \arg\min \mathrm{conv}_D \mathcal{L}_{0\text{-}1} \). Since \( \mathrm{conv}_D \mathcal{L}_{0\text{-}1}(x^*) \leq \mathcal{L}_{0\text{-}1}(x^*) \) and \( \mathrm{conv}_D \mathcal{L}_{0\text{-}1} \) attains its minimum at the same points as \( \mathcal{L}_{0\text{-}1} \), \( x^* \) must also minimize \( \mathcal{L}_{0\text{-}1} \).

\textbf{Part 2:} For \( D = [-a, b] \), both \( \mathcal{L}_{0\text{-}1}(x) \) and \( \frac{1}{a}\mathrm{ReLU}(-x) \) attain their minimum value 0 on \( [0, b] \). For \( x \in [-a, 0) \), \( \frac{1}{a}\mathrm{ReLU}(-x) \) is strictly decreasing, achieving its minimum at \( x = 0 \). Thus:
\[
\arg\min_{x \in D} \mathcal{L}_{0\text{-}1}(x) = \arg\min_{x \in D} \frac{1}{a}\mathrm{ReLU}(-x) = [0, b].
\]
\qedhere
\end{proof}

\subsection{Proof of Corollary \ref{cor:non_convex}}
\nonconvex*
\begin{proof}
    We demonstrate violation of the convex envelope's defining property. Consider \( D = [-1,1] \):

1. \textbf{Underestimation Failure:} For \( x > 0 \):
\[
-\log \sigma(x) = -\log\left(\frac{1}{1+e^{-x}}\right) = \log(1 + e^{-x}) > 0 = \mathcal{L}_{0\text{-}1}(x)
\]
Thus \( f_{\mathrm{log-sigmoid}} \not\leq \mathcal{L}_{0\text{-}1} \) over \( D \), violating the envelope requirement.

2. \textbf{Non-Maximality:} Even if scaled, the logistic loss's curvature differs from the ReLU envelope. For \( x \in (-1,0) \), \( \frac{d^2}{dx^2}(-\log \sigma(x)) = \sigma(x)(1-\sigma(x)) > 0 \), making it strictly convex -- incompatible with the affine structure of \( \mathrm{conv}_D \mathcal{L}_{0\text{-}1} \).

Hence \( f_{\mathrm{log-sigmoid}} \) cannot be the convex envelope.
\end{proof}

\section{Experiments}

\subsection{Implementation Details}
\label{sec:appendix_implement}

Empirical observations indicate significant performance sensitivity to model parameter initialization and learning rate selection across compared methods. To establish rigorous comparison benchmarks, we conducted systematic hyperparameter searches adhering to the specifications in each method's original publication. The complete search space configuration is documented in Table~\ref{tab:hyperparams_baseline}. Notably, substantial architecture updates to both Llama3-8B and Instruct-7B necessitated re-implementation of the SimPO method, as the original implementation became incompatible with the revised model interfaces.

\textbf{Training Protocol}
All experiments employed standardized training configurations to ensure comparability: 
\begin{itemize}
    \item Batch size: 128 (consistent across methods)
    \item Learning rate: Searched in \{3\text{e-}7, 5\text{e-}7, 8\text{e-}7, 1\text{e-}6\}
    \item Training duration: Single epoch with cosine annealing schedule
    \item Warmup: 10\% of total training steps
    \item Optimizer: Adam~\citep{kingma2014adam} ($\beta_1=0.9$, $\beta_2=0.999$)
    \item Sequence length: 2048 tokens (fixed for all inputs)
\end{itemize}

The learning rate schedule follows a triangular policy with amplitude decay, selected through cross-validation on held-out development sets. All implementations utilize full-precision floating-point arithmetic to prevent gradient quantization artifacts.

\begin{table*}[t]
    \vspace{-1em}
    \caption{Various preference optimization objectives and hyperparameter search range.}
    \label{tab:hyperparams_baseline}
    \centering
    \resizebox{\textwidth}{!}{
    \small
    \begin{tabular}{lll}
    \toprule 
    \textbf{Method} & \textbf{Objective} & \textbf{Hyperparameter} \\ \midrule
    \multirow{2}{*}{SLiC-HF~\cite{SlicHF}} & \multirow{2}{*}{$\max\left(0, \delta - \log \pi_\theta(y_w|x) + \log \pi_\theta(y_l|x)\right) - \lambda \log \pi_\theta (y_w | x)$} & $\lambda \in [0.1, 0.5, 1.0, 10.0]$ \\
&  & $\delta \in [0.1, 0.5, 1.0, 2.0]$ \\ \midrule
    DPO~\citep{DPO} & $-\log \sigma \left( \beta \log \frac{\pi_\theta(y_w|x)}{\pi_{\text{ref}}(y_w|x)} - \beta \log \frac{\pi_\theta(y_l|x)}{\pi_{\text{ref}}(y_l|x)}\right)$ & $\beta \in [0.01, 0.05, 0.1]$ \\ \midrule 
    IPO~\citep{Azar2023AGT} & $ \left( \log \frac{\pi_\theta(y_w|x)}{\pi_{\text{ref}}(y_w|x)} - \log \frac{\pi_\theta(y_l|x)}{\pi_{\text{ref}}(y_l|x)} - \frac{1}{2\tau} \right)^2$ & $\tau \in [0.01, 0.1, 0.5, 1.0]$ \\  \midrule 
    CPO~\citep{xu2024contrastive} &  $-\log \sigma  \left(\beta \log \pi_\theta(y_w|x) - \beta \log \pi_\theta(y_l|x) \right) - \lambda \log \pi_\theta (y_w|x)$ & $\alpha = 1.0, \,\, \beta \in [0.01, 0.05, 0.1]$ \\ \midrule
    \multirow{2}{*}{KTO~\citep{Ethayarajh2024KTOMA}} & $-\lambda_w \sigma \left( \beta \log \frac{\pi_\theta(y_w|x)}{\pi_{\text{ref}}(y_w|x)} - z_{\text{ref}} \right) +  \lambda_l \sigma \left( z_{\text{ref}} - \beta \log \frac{\pi_\theta(y_l|x)}{\pi_{\text{ref}}(y_l|x)} \right),\,$ & $\lambda_l = \lambda_w = 1.0$ \\  
    & $\text{where} \,\, z_{\text{ref}} = \mathbb{E}_{(x, y) \sim \mathcal{D}} \left[\beta \text{KL}\left( \pi_\theta(y|x) || \pi_{\text{ref}}(y|x) \right)  \right]$ & $\beta \in [0.01, 0.05, 0.1]$ \\ \midrule
    \multirow{2}{*}{ORPO~\citep{Hong2024ORPOMP}} & $-\log p_\theta(y_w|x) - \lambda  \log \sigma \left(\log \frac{p_\theta(y_w|x)}{1 - p_\theta(y_w|x)} - \log \frac{p_\theta(y_l|x)}{1 - p_\theta(y_l|x)}  \right),\,$ & \multirow{2}{*}{$\lambda \in [0.1, 0.5, 1.0, 2.0]$} \\  
    & $\text{where} \,\, p_\theta(y|x) = \exp\left( \frac{1}{|y|} \log \pi_\theta(y|x) \right)$ \\  \midrule
    \multirow{2}{*}{R-DPO~\citep{Park2024DisentanglingLF}} & \multirow{2}{*}{$-\log \sigma \left( \beta \log \frac{\pi_\theta(y_w|x)}{\pi_{\text{ref}}(y_w|x)} - \beta \log \frac{\pi_\theta(y_l|x)}{\pi_{\text{ref}}(y_l|x)} - \left(\alpha |y_w| - \alpha |y_l| \right) \right)$} & $\alpha \in [0.05, 0.1, 0.5, 1.0]$ \\
    & & $\beta \in [0.01, 0.05, 0.1]$ \\
    \midrule 
    \multirow{2}{*}{SimPO~\citep{SimPO}} & \multirow{2}{*}{$-\log \sigma  \left( \frac{\beta}{|y_w|} \log \pi_\theta(y_w|x) - \frac{\beta}{|y_l|} \log \pi_\theta(y_l|x) - \gamma \right)$} & $\beta \in [2.0, 4.0, 6.0, 8.0]$ \\
    & & $\gamma \in [0.3, 0.5, 1.0, 1.2, 1.4, 1.6]$ \\
    \midrule 
    \method & {$\text{ReLU}[ - (\frac{1}{|y_w|} \log \pi_\theta(y_w|x) - \frac{1}{|y_l|} \log \pi_\theta(y_l|x) - \gamma)] $} & $\gamma \in [0.2, 0.4, 0.5, 0.6, 0.8]$ \\
    \bottomrule
    \end{tabular}
    }
\end{table*}

\begin{table*}[t]
\vspace{-1.3em}
\caption{The hyperparameter values in \method used for each training setting.}
\label{tab:hyperparams_simpo}
\centering
\begin{tabular}{lcc}
\toprule 
\textbf{Setting} & $\gamma$ &  Learning rate \\ \midrule
\textbf{Mistral-Instruct}   & 0.4  & 6e-7 \\
\textbf{Llama3-Instruct}  & 0.6  & 1e-6 \\
\textbf{Llama3-Instruct-v0.2}  & 0.6  & 1e-6 \\
\textbf{Gemma2-Instruct} & 0.4 & 8e-7 \\
\bottomrule
\end{tabular}
\end{table*}

\textbf{Hyperparameters in \method{}.} 
Table~\ref{tab:hyperparams_simpo} summarizes the hyperparameters utilized for \method{} across different experimental settings. Our methodology only involves  one hyperparameter: \(\gamma\). Based on empirical evidence, we recommend setting \(\gamma\) to a default value of \(0.5\), as this configuration has consistently demonstrated reliability. 

\textbf{Decoding Hyperparameters.} 
The decoding hyperparameters employed in this study align with those used in SimPO\footnote{\url{https://github.com/princeton-nlp/SimPO/tree/main/eval}}. We express our gratitude to the SimPO team for their generosity in sharing their insights and configurations, which have been instrumental in our work.

\textbf{Computation Environment.} 
All training experiments described in this paper were conducted using 8×A100 GPUs. The experimental setup follows the guidelines provided in the alignment-handbook repository\footnote{\url{https://github.com/huggingface/alignment-handbook}}, ensuring reproducibility and consistency with established practices.

\subsection{Downstream Task Evaluation}
\label{sec:downstram_task}
\begin{table}[h]
    \caption{Downstream task evaluation results of tasks on the huggingface open leaderboard. \label{tab:downstream_v2}}
    \resizebox{\textwidth}{!}{\begin{tabular}{@{}lccccccc@{}}
    \toprule
                   & \textbf{MMLU (5)} & \textbf{ARC (25)} & \textbf{HellaSwag (10)} & \textbf{TruthfulQA (0)} & \textbf{Winograd (5)} & \textbf{GSM8K (5)} & \textbf{Average} \\ \midrule
                   \multicolumn{8}{c}{{\color[HTML]{222222} \textbf{Llama3-Instruct}}}                                                                                       \\ \midrule
                   \textbf{SFT}   & 67.06            & 61.01            & 78.57                  & 51.66                  & 74.35                & 68.69             & 66.89           \\
                   \textbf{RRHF} & 67.20 & 61.52 & 79.54 & 53.76 & 74.19 & 66.11 & 67.05 \\
                   \textbf{SLiC-HF} & 66.41 & 61.26 & 78.80 & 53.23 & 76.16 & 66.57 & 67.07 \\
                   \textbf{DPO}   & 66.88            & 63.99            & 80.78                  & 59.01                  & 74.66                & 49.81             & 65.86           \\
                   \textbf{IPO}   & 66.52            & 61.95            & 77.90                  & 54.64                  & 73.09                & 58.23             & 65.39           \\
                   \textbf{CPO} & 67.05 & 62.29 & 78.73 & 54.01 & 73.72 & 67.40 & 67.20 \\
                   \textbf{KTO}   & 66.38            & 63.57            & 79.51                  & 58.15                  & 73.40                & 57.01             & 66.34           \\
                   \textbf{ORPO}  & 66.41            & 61.01            & 79.38                  & 54.37                  & 75.77                & 64.59             & 66.92           \\
                   \textbf{R-DPO} & 66.74            & 64.33            & 80.97                  & 60.32                  & 74.82                & 43.90             & 65.18           \\
                   \textbf{SimPO} & 65.63            & 62.80            & 78.33                  & 60.70                  & 73.32                & 50.72             & 65.25           \\ 
                   \textbf{\method{}} & 64.95      &62.03      &77.58      &60.96      &72.93      &66.49      &67.49 \\ \midrule
                   \multicolumn{8}{c}{{\color[HTML]{222222} \textbf{Llama3-Instruct v0.2}}}                                                                            \\ \midrule
                   \textbf{SFT}   & 67.06            & 61.01            & 78.57                  & 51.66                  & 74.35                & 68.69             & 66.89           \\
                   \textbf{RRHF} & 66.60 & 63.74 & 80.98 & 59.40 & 76.32 & 58.68 & 67.62 \\
                   \textbf{SLiC-HF} &  66.91 & 61.77 & 79.17 & 56.36 & 76.40 & 68.23 & 68.14 \\
               
                   \textbf{DPO}   & 65.57      &65.87      &79.66      &63.08      &74.51      &73.01      &70.28 \\
                   \textbf{IPO}   & 66.06      &64.85      &81.02      &57.29      &76.72      &76.12      &70.34 \\
                   \textbf{CPO} & 65.67      &62.12      &79.63      &56.34      &77.98      &75.28      &69.50 \\
                   \textbf{KTO}   & 65.99      &62.88      &79.02      &54.66      &74.66      &76.42      &68.94 \\
                   \textbf{ORPO}  & 65.75      &63.99      &79.91      &57.02      &78.06      &75.13      &69.98 \\
                   \textbf{R-DPO} & 66.17      &65.36      &79.98      &57.94      &75.06      &75.36      &69.98           \\
                   \textbf{SimPO} & 65.18      &67.15      &78.04      &64.92      &73.88      &71.34      &70.08    \\
                   \textbf{\method{}} & 65.00      &68.09      &80.50      &64.38      &76.16      &69.37      &70.58  \\ \bottomrule
                \end{tabular}}
\end{table}

To assess the impact of \method{} on downstream task performance, we evaluate models trained with different preference optimization methods on a diverse set of tasks from the Huggingface Open Leaderboard~\cite{open-llm-leaderboard}. The tasks include MMLU~\cite{hendrycks2020measuring}, ARC~\cite{clark2018think}, HellaSwag~\cite{zellers-etal-2019-hellaswag}, TruthfulQA~\cite{lin2022truthfulqa}, Winograd~\cite{levesque2012winograd}, and GSM8K~\cite{cobbe2021gsm8k}. We adhere to standard evaluation protocols and present the results for all models in \Cref{tab:downstream_v2}.

\paragraph{Overall Performance.}
On average, \method{} shows competitive performance across tasks, achieving an overall score of 67.49 on the Llama3-Instruct model and 70.58 on the Llama3-Instruct v0.2 model. The performance is generally close to that of other preference optimization methods, but it is worth noting that in some cases, it slightly lags behind models like SimPO or DPO, particularly on tasks such as ARC, HellaSwag, and TruthfulQA. However, the results suggest that \method{} maintains a balanced performance profile across the evaluated tasks.

\textbf{General Knowledge and Reasoning.}
On MMLU, which tests general knowledge and reasoning, \method{} shows a slight reduction in performance (64.95 for Llama3-Instruct and 65.00 for Llama3-Instruct v0.2) compared to models such as RRHF and SimPO. This minor decline is consistent with the trend observed for other preference optimization methods and indicates that \method{} may preserve general knowledge to a similar extent while possibly focusing more on improving performance in other areas such as reading comprehension and reasoning.

\textbf{Reading Comprehension and Commonsense Reasoning.}
For ARC and HellaSwag, tasks related to reading comprehension and commonsense reasoning, \method{} outperforms the base SFT model and exhibits competitive performance relative to other preference optimization methods. The Llama3-Instruct v0.2 model with \method{} achieves a score of 80.50 on HellaSwag, which is comparable to the best-performing methods. This result suggests that \method{} effectively improves the model’s ability to handle contextual understanding and reasoning, likely due to its optimization strategy.

\textbf{Truthfulness.}
On the TruthfulQA task, \method{} consistently shows improvements over the base SFT model, with a score of 60.96 for Llama3-Instruct and 64.38 for Llama3-Instruct v0.2. This indicates that \method{} helps the model generate more truthful and reliable responses, aligning with trends observed in other preference optimization methods. The improvement in this area is especially notable given the inherent difficulty of this task, which tests the model’s ability to avoid generating false information.

\textbf{Math Performance.}
The GSM8K benchmark, which tests mathematical reasoning, shows a drop in performance for \method{} relative to the base SFT model. Specifically, the Llama3-Instruct model with \method{} achieves a score of 66.49, which is lower than other methods such as SimPO or R-DPO, which focus more on improving mathematical reasoning. This drop is consistent with the trend observed across various preference optimization methods and may suggest that \method{} is less effective in retaining mathematical reasoning abilities. Further investigation into this issue could provide insights into potential strategies for addressing this gap.

\textbf{Task-Specific Variability.}
Overall, \method{} exhibits varied performance across tasks. While it performs well in certain areas, such as commonsense reasoning and truthfulness, it lags behind in others, particularly in general knowledge (MMLU) and mathematical reasoning (GSM8K). This variability is in line with the performance trends observed for other preference optimization methods, which often show task-dependent improvements and declines. This suggests that \method{} has strengths in some domains, but it may benefit from further refinement to improve performance across all tasks.


\subsection{\method{} with varying $\gamma$}
\label{sec_appendix_vary_gamma}
\begin{figure}[t]
    \begin{center}
        \includegraphics[width=0.6\columnwidth]{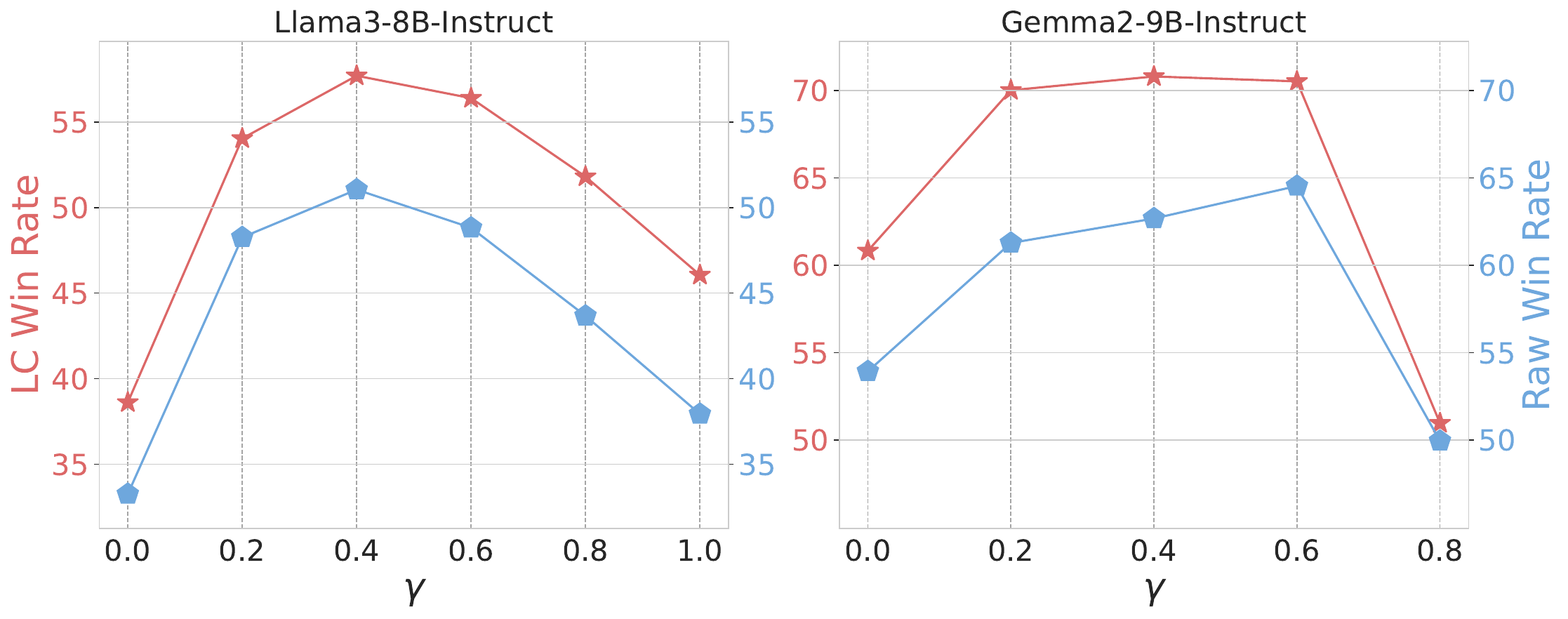}
        \caption{Impact of the hyperparameter \(\gamma\) on LC Win Rate and Raw Win Rate for Llama3-8B-Instruct (left) and Gemma2-9B-Instruct (right). 
        }
        \label{fig:vary_gamma}
    \end{center}
\end{figure}

Figure~\ref{fig:vary_gamma} illustrates the effect of the hyperparameter \(\gamma\) on model performance across two evaluation metrics: LC Win Rate and Raw Win Rate. The analysis is conducted on two models, Llama3-8B-Instruct (left) and Gemma2-9B-Instruct (right). The LC Win Rate, shown in red (left y-axis), represents the model’s alignment with learned preferences, whereas the Raw Win Rate, shown in blue (right y-axis), evaluates overall ranking performance based on human preference comparisons.

\textbf{Moderate \(\gamma\) values lead to optimal performance.} 
Moderate values of \(\gamma\) (0.4–0.6) yield the best balance between preference alignment and generalization. Both models achieve their highest LC Win Rate in this range, indicating that preference optimization is most effective when applied at an intermediate level. As \(\gamma\) increases beyond 0.6, LC Win Rate starts to decline, likely due to overfitting, where the model overly aligns with preference data at the expense of generalization. Conversely, at \(\gamma=0.0\), where no preference optimization is applied, the LC Win Rate remains low, emphasizing the necessity of preference tuning.

\textbf{Raw Win Rate trends reveal model robustness differences.}
The Raw Win Rate follows a similar trend but highlights differences in robustness across models. For Llama3-8B-Instruct, the Raw Win Rate peaks at \(\gamma=0.4\) before declining, suggesting that excessive preference optimization (\(\gamma>0.6\)) negatively impacts the model’s ability to generalize. In contrast, Gemma2-9B-Instruct exhibits a more stable Raw Win Rate across a wider range of \(\gamma\), reaching its highest performance at \(\gamma=0.6\) before experiencing a sharp decline at \(\gamma=0.8\). This suggests that Gemma2-9B-Instruct maintains better robustness to preference optimization compared to Llama3-8B-Instruct.

\textbf{Gemma2-9B-Instruct outperforms Llama3-8B-Instruct.} 
Gemma2-9B-Instruct consistently outperforms Llama3-8B-Instruct in both LC Win Rate and Raw Win Rate. This observation indicates that Gemma2-9B-Instruct not only aligns more effectively with learned preferences but also retains superior generalization capability. The results highlight the importance of carefully selecting \(\gamma\) to avoid performance degradation at extreme values. Future work could explore adaptive strategies for dynamically tuning \(\gamma\), ensuring that preference optimization enhances alignment without compromising generalization.

\subsection{Analsis on \newmethod{}}
Figure \ref{fig:gradient_analysis_ours} illustrates the relationship between the gradient and the implicit reward margin. As shown in the figure, when the implicit reward margin is greater than $\gamma$, the gradient becomes zero. In this case, the model can stop updating for well-separated pairs, thus preventing overfitting. On the other hand, when the implicit reward margin is less than $\gamma$, the model continues to increase the weight for less-separated pairs. Furthermore, the harder the pair is to distinguish, the larger the gradient becomes, eventually converging to 1.0. This behavior is reminiscent of curriculum learning, where more difficult samples are assigned higher weights.

\begin{figure}[t]
    \begin{center}
        \includegraphics[width=0.4\columnwidth]{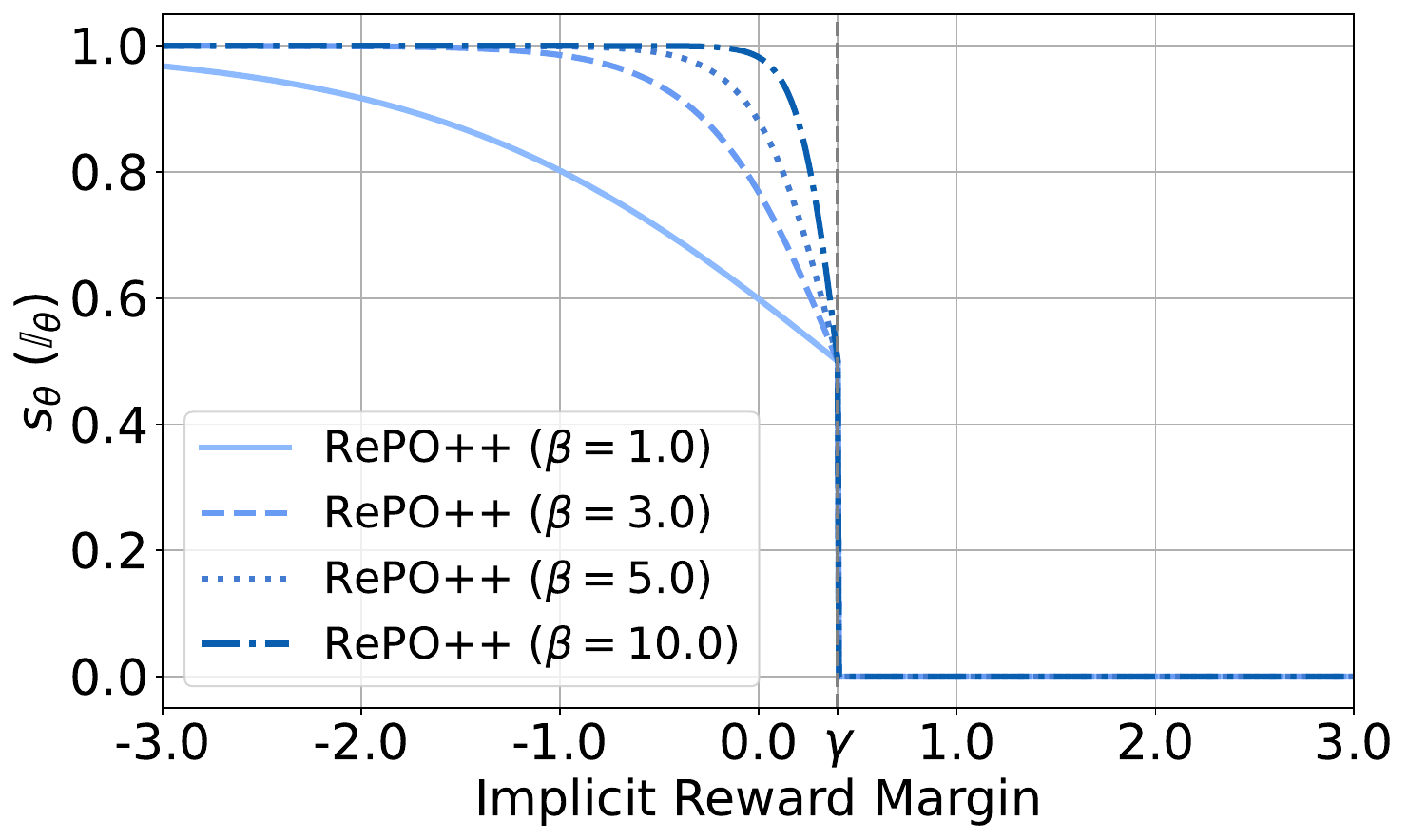}
        \caption{Gradient weighting functions of \newmethod{} ($s_\theta \cdot \mathbb{I}(M_\theta < \gamma)$).}
        \label{fig:gradient_analysis_ours}
    \end{center}
\end{figure}

\subsection{Analysis of the Relationship Between SLiC-HF and RePO}
\label{sec:appendix_slichf_repo}

To provide deeper insights into the relationship between SLiC-HF and RePO, we conducted additional experiments examining the effect of SFT regularization—a core component of SLiC-HF that is absent in our method. As demonstrated in Table \ref{tab:hyperparams_slichf}, we systematically evaluated performance across varying values of the regularization coefficient $\lambda$.

\textbf{Mathematical Comparison.} From a formulation perspective, SLiC-HF combines a hinge loss term with an SFT regularization component that penalizes deviation from the reference model. Specifically, the SFT regularization is controlled by parameter $\lambda$, which balances preference optimization against model drift. By contrast, RePO eliminates this regularization entirely, relying solely on its binary threshold mechanism to control optimization.

\textbf{Impact of SFT Regularization.} Our experimental results reveal a consistent trend: as $\lambda$ increases from 0.0 to 5.0, performance on AlpacaEval2 LC steadily declines from 34.1\% to 27.8\%. This progressive degradation suggests that stronger regularization toward the initial SFT model actually hinders effective preference learning in this context. The optimal performance occurs at $\lambda=0.0$, which effectively transforms SLiC-HF into a variant of RePO.

\textbf{Different Optimization Challenges.} These findings suggest that SFT regularization and RePO's threshold-based filtering address fundamentally different optimization challenges. While SFT regularization was originally introduced to mitigate catastrophic forgetting and preserve general capabilities, our results indicate that for direct preference optimization, such regularization is unnecessary and potentially counterproductive. Instead, RePO's selective gradient application through its threshold mechanism appears sufficient to prevent over-optimization while maintaining effective preference learning.

This analysis complements our main findings and further supports our hypothesis that carefully designed filtering mechanisms can effectively replace more complex regularization schemes in preference optimization.

\begin{table*}[h!]
\vspace{-1.3em}
\caption{The hyperparameter $\lambda$ in SliC-HF used for each Llama3-Instruct v0.2.}
\label{tab:hyperparams_slichf}
\centering
\small
\begin{tabular}{lcccccc}
\toprule 
\textbf{SLiC-HF} & $\lambda=0.0$ & $\lambda=0.1$ & $\lambda=0.5$ & $\lambda=1.0$ & $\lambda=3.0$ & $\lambda=5.0$  \\ 
\midrule
AlpacaEval2 LC  & 34.1 & 33.9 & 32.8 & 30.8 & 28.6 & 27.8 \\
\bottomrule
\end{tabular}
\end{table*}

\subsection{Detailed Analysis of the Relationship Between DPO and RePO}
\label{sec:appendix_dpo_repo}

To further investigate the relationship between DPO and RePO, we conducted additional experiments examining how different formulation components affect performance. As shown in Table \ref{tab:ablation2}, we systematically evaluated five variants that decompose the key elements of each approach.

\textbf{DPO as a Special Case of SimPO.} From a mathematical perspective, DPO can be viewed as a specific instance of SimPO where $\gamma = \log\pi_{\text{ref}}(y_w \mid x) - \log\pi_{\text{ref}}(y_l \mid x)$ (ignoring length normalization for equivalence). This connection highlights how DPO implicitly defines its target margin based on reference model probabilities rather than using an explicit hyperparameter.

\textbf{Impact of ReLU Without Explicit Margin.} The second row of Table \ref{tab:ablation2} shows that directly replacing log-sigmoid with ReLU while maintaining DPO's implicit margin definition leads to catastrophic performance degradation (-44\% on AlpacaEval LC, -47\% on AlpacaEval WR, and -26\% on Arena-Hard). This dramatic decline reveals that the binary threshold mechanism of ReLU is only effective when paired with an appropriate explicit margin parameter.

\textbf{The Critical Role of $\gamma$.} Rows 3-5 demonstrate that adding an explicit $\gamma$ parameter consistently improves performance across all metrics regardless of whether log-sigmoid, ReLU, or their combination is used. The most substantial gains appear when ReLU and $\gamma$ are combined (+4.4\% LC, +9.1\% WR on AlpacaEval), supporting our hypothesis that explicit threshold-based filtering effectively controls over-optimization.

\textbf{Complementary Mechanisms.} Interestingly, the combination of both mechanisms (row 5) yields the highest overall performance, suggesting that while RePO's binary filtering mechanism addresses the core over-optimization challenge, the continuous weighting from log-sigmoid may provide complementary benefits for fine-grained preference learning.

\begin{table}[!h]
    \caption{
        Impact Analysis of $\gamma$ Scaling and ReLU Mechanisms in DPO Training. Benchmark results on AlpacaEval 2.0 (AE2) and Arena-Hard (AH) demonstrate percentage point changes in Length-Controlled Win Rate (LC-WR) and Base Win Rate (WR) for Llama3-Instruct-v0.2 (8B). Values represent relative performance deltas (\%) compared to standard DPO baseline.}
    \label{tab:ablation2}
    \vspace{-10pt}
    \begin{center}
    \resizebox{1.0\linewidth}{!}{%
    \begin{tabular}{lllllll}
    \toprule
    \multirow{3}{*}{\textbf{Method}} & \multirow{3}{*}{\textbf{$\gamma$}} & \multirow{3}{*}{\textbf{ReLU}} & \multirow{3}{*}{\textbf{$\log \sigma$}} & \multicolumn{3}{c}{\textbf{Llama3-Instruct v0.2 (8B)}} \\
    \cmidrule(lr){5-7}
    &&& & \multicolumn{2}{c}{\textbf{AE 2}} & \multicolumn{1}{c}{\textbf{AH}} \\
    \cmidrule(lr){5-6} \cmidrule(lr){7-7}
    & & & & \textbf{LC} & \textbf{WR} &  \textbf{WR}\\
    \midrule
    $-\log \sigma \left( \beta \log \frac{\pi_\theta(y_w|x)}{\pi_{\text{ref}}(y_w|x)} - \beta \log \frac{\pi_\theta(y_l|x)}{\pi_{\text{ref}}(y_l|x)}\right)$ 
    &\ding{53} & \ding{53}& \ding{51}&48.2 & 47.5  & 35.2  \\
    $\text{ReLU} \left( - \left( \log \frac{\pi_\theta(y_w|x)}{\pi_{\text{ref}}(y_w|x)} - \log \frac{\pi_\theta(y_l|x)}{\pi_{\text{ref}}(y_l|x)} \right) \right)$ 
    & \ding{53} & \ding{51} & \ding{53} &$26.9^{\color{-}-44\%}$  & $25.1^{\color{-}-47\%}$ & $26.2^{\color{-}-26\%}$  \\
    \midrule \midrule
    $-\log \sigma \left( \beta \log \frac{\pi_\theta(y_w|x)}{\pi_{\text{ref}}(y_w|x)} - \beta \log \frac{\pi_\theta(y_l|x)}{\pi_{\text{ref}}(y_l|x)} - \gamma \right)$
    &\ding{51} & \ding{53} & \ding{51}&$50.0^{\color{+}+3.7\%}$  & $50.7^{\color{+}+6.7\%}$ & $36.8^{\color{+}+4.5\%}$  \\
    $\text{ReLU} \left( - \left(  \log \frac{\pi_\theta(y_w|x)}{\pi_{\text{ref}}(y_w|x)} - \log \frac{\pi_\theta(y_l|x)}{\pi_{\text{ref}}(y_l|x)} - \gamma \right) \right)$ 
    &\ding{51} &\ding{51} & \ding{53}&$50.3^{\color{+}+4.4\%}$  & $51.8^{\color{+}+9.1\%}$ & $38.2^{\color{+}+8.5\%}$  \\
    $- \log \sigma \left( - \text{ReLU} \left( - \left( \beta \log \frac{\pi_\theta(y_w|x)}{\pi_{\text{ref}}(y_w|x)} - \beta \log \frac{\pi_\theta(y_l|x)}{\pi_{\text{ref}}(y_l|x)} - \gamma \right) \right) \right)$ 
    &\ding{51} &\ding{51} & \ding{51}&$50.8^{\color{+}+5.4\%}$ & $52.2^{\color{+}+9.9\%}$  & $37.2^{\color{+}+5.7\%}$  \\
    \bottomrule
    \end{tabular}
    }
    \vspace{-15pt}
    \end{center}
\end{table}


\newpage
\section*{NeurIPS Paper Checklist}

\begin{enumerate}

\item {\bf Claims}
    \item[] Question: Do the main claims made in the abstract and introduction accurately reflect the paper's contributions and scope?
    \item[] Answer: \answerYes{} 
    \item[] Justification: see abstract and introduction.
    \item[] Guidelines:
    \begin{itemize}
        \item The answer NA means that the abstract and introduction do not include the claims made in the paper.
        \item The abstract and/or introduction should clearly state the claims made, including the contributions made in the paper and important assumptions and limitations. A No or NA answer to this question will not be perceived well by the reviewers. 
        \item The claims made should match theoretical and experimental results, and reflect how much the results can be expected to generalize to other settings. 
        \item It is fine to include aspirational goals as motivation as long as it is clear that these goals are not attained by the paper. 
    \end{itemize}

\item {\bf Limitations}
    \item[] Question: Does the paper discuss the limitations of the work performed by the authors?
    \item[] Answer: \answerYes{} 
    \item[] Justification: see Section \ref{sec:conclusion}.
    \item[] Guidelines:
    \begin{itemize}
        \item The answer NA means that the paper has no limitation while the answer No means that the paper has limitations, but those are not discussed in the paper. 
        \item The authors are encouraged to create a separate "Limitations" section in their paper.
        \item The paper should point out any strong assumptions and how robust the results are to violations of these assumptions (e.g., independence assumptions, noiseless settings, model well-specification, asymptotic approximations only holding locally). The authors should reflect on how these assumptions might be violated in practice and what the implications would be.
        \item The authors should reflect on the scope of the claims made, e.g., if the approach was only tested on a few datasets or with a few runs. In general, empirical results often depend on implicit assumptions, which should be articulated.
        \item The authors should reflect on the factors that influence the performance of the approach. For example, a facial recognition algorithm may perform poorly when image resolution is low or images are taken in low lighting. Or a speech-to-text system might not be used reliably to provide closed captions for online lectures because it fails to handle technical jargon.
        \item The authors should discuss the computational efficiency of the proposed algorithms and how they scale with dataset size.
        \item If applicable, the authors should discuss possible limitations of their approach to address problems of privacy and fairness.
        \item While the authors might fear that complete honesty about limitations might be used by reviewers as grounds for rejection, a worse outcome might be that reviewers discover limitations that aren't acknowledged in the paper. The authors should use their best judgment and recognize that individual actions in favor of transparency play an important role in developing norms that preserve the integrity of the community. Reviewers will be specifically instructed to not penalize honesty concerning limitations.
    \end{itemize}

\item {\bf Theory assumptions and proofs}
    \item[] Question: For each theoretical result, does the paper provide the full set of assumptions and a complete (and correct) proof?
    \item[] Answer: \answerYes{} 
    \item[] Justification: see Appendix \ref{sec:proofs}.
    \item[] Guidelines:
    \begin{itemize}
        \item The answer NA means that the paper does not include theoretical results. 
        \item All the theorems, formulas, and proofs in the paper should be numbered and cross-referenced.
        \item All assumptions should be clearly stated or referenced in the statement of any theorems.
        \item The proofs can either appear in the main paper or the supplemental material, but if they appear in the supplemental material, the authors are encouraged to provide a short proof sketch to provide intuition. 
        \item Inversely, any informal proof provided in the core of the paper should be complemented by formal proofs provided in appendix or supplemental material.
        \item Theorems and Lemmas that the proof relies upon should be properly referenced. 
    \end{itemize}

    \item {\bf Experimental result reproducibility}
    \item[] Question: Does the paper fully disclose all the information needed to reproduce the main experimental results of the paper to the extent that it affects the main claims and/or conclusions of the paper (regardless of whether the code and data are provided or not)?
    \item[] Answer: \answerYes{} 
    \item[] Justification: see Appendix \ref{sec:appendix_implement}.
    \item[] Guidelines:
    \begin{itemize}
        \item The answer NA means that the paper does not include experiments.
        \item If the paper includes experiments, a No answer to this question will not be perceived well by the reviewers: Making the paper reproducible is important, regardless of whether the code and data are provided or not.
        \item If the contribution is a dataset and/or model, the authors should describe the steps taken to make their results reproducible or verifiable. 
        \item Depending on the contribution, reproducibility can be accomplished in various ways. For example, if the contribution is a novel architecture, describing the architecture fully might suffice, or if the contribution is a specific model and empirical evaluation, it may be necessary to either make it possible for others to replicate the model with the same dataset, or provide access to the model. In general. releasing code and data is often one good way to accomplish this, but reproducibility can also be provided via detailed instructions for how to replicate the results, access to a hosted model (e.g., in the case of a large language model), releasing of a model checkpoint, or other means that are appropriate to the research performed.
        \item While NeurIPS does not require releasing code, the conference does require all submissions to provide some reasonable avenue for reproducibility, which may depend on the nature of the contribution. For example
        \begin{enumerate}
            \item If the contribution is primarily a new algorithm, the paper should make it clear how to reproduce that algorithm.
            \item If the contribution is primarily a new model architecture, the paper should describe the architecture clearly and fully.
            \item If the contribution is a new model (e.g., a large language model), then there should either be a way to access this model for reproducing the results or a way to reproduce the model (e.g., with an open-source dataset or instructions for how to construct the dataset).
            \item We recognize that reproducibility may be tricky in some cases, in which case authors are welcome to describe the particular way they provide for reproducibility. In the case of closed-source models, it may be that access to the model is limited in some way (e.g., to registered users), but it should be possible for other researchers to have some path to reproducing or verifying the results.
        \end{enumerate}
    \end{itemize}

\item {\bf Open access to data and code}
    \item[] Question: Does the paper provide open access to the data and code, with sufficient instructions to faithfully reproduce the main experimental results, as described in supplemental material?
    \item[] Answer: \answerYes{} 
    \item[] Justification: \url{https://github.com/junkangwu/RePO}.
    \item[] Guidelines:
    \begin{itemize}
        \item The answer NA means that paper does not include experiments requiring code.
        \item Please see the NeurIPS code and data submission guidelines (\url{https://nips.cc/public/guides/CodeSubmissionPolicy}) for more details.
        \item While we encourage the release of code and data, we understand that this might not be possible, so “No” is an acceptable answer. Papers cannot be rejected simply for not including code, unless this is central to the contribution (e.g., for a new open-source benchmark).
        \item The instructions should contain the exact command and environment needed to run to reproduce the results. See the NeurIPS code and data submission guidelines (\url{https://nips.cc/public/guides/CodeSubmissionPolicy}) for more details.
        \item The authors should provide instructions on data access and preparation, including how to access the raw data, preprocessed data, intermediate data, and generated data, etc.
        \item The authors should provide scripts to reproduce all experimental results for the new proposed method and baselines. If only a subset of experiments are reproducible, they should state which ones are omitted from the script and why.
        \item At submission time, to preserve anonymity, the authors should release anonymized versions (if applicable).
        \item Providing as much information as possible in supplemental material (appended to the paper) is recommended, but including URLs to data and code is permitted.
    \end{itemize}

\item {\bf Experimental setting/details}
    \item[] Question: Does the paper specify all the training and test details (e.g., data splits, hyperparameters, how they were chosen, type of optimizer, etc.) necessary to understand the results?
    \item[] Answer: \answerYes{} 
    \item[] Justification: see Appendix \ref{sec:appendix_implement}.
    \item[] Guidelines:
    \begin{itemize}
        \item The answer NA means that the paper does not include experiments.
        \item The experimental setting should be presented in the core of the paper to a level of detail that is necessary to appreciate the results and make sense of them.
        \item The full details can be provided either with the code, in appendix, or as supplemental material.
    \end{itemize}

\item {\bf Experiment statistical significance}
    \item[] Question: Does the paper report error bars suitably and correctly defined or other appropriate information about the statistical significance of the experiments?
    \item[] Answer: \answerYes{} 
    \item[] Justification: The results are accompanied by error bars, confidence intervals, or statistical
significance tests, at least for the experiments that support the main claims of the paper.
    \item[] Guidelines:
    \begin{itemize}
        \item The answer NA means that the paper does not include experiments.
        \item The authors should answer "Yes" if the results are accompanied by error bars, confidence intervals, or statistical significance tests, at least for the experiments that support the main claims of the paper.
        \item The factors of variability that the error bars are capturing should be clearly stated (for example, train/test split, initialization, random drawing of some parameter, or overall run with given experimental conditions).
        \item The method for calculating the error bars should be explained (closed form formula, call to a library function, bootstrap, etc.)
        \item The assumptions made should be given (e.g., Normally distributed errors).
        \item It should be clear whether the error bar is the standard deviation or the standard error of the mean.
        \item It is OK to report 1-sigma error bars, but one should state it. The authors should preferably report a 2-sigma error bar than state that they have a 96\% CI, if the hypothesis of Normality of errors is not verified.
        \item For asymmetric distributions, the authors should be careful not to show in tables or figures symmetric error bars that would yield results that are out of range (e.g. negative error rates).
        \item If error bars are reported in tables or plots, The authors should explain in the text how they were calculated and reference the corresponding figures or tables in the text.
    \end{itemize}

\item {\bf Experiments compute resources}
    \item[] Question: For each experiment, does the paper provide sufficient information on the computer resources (type of compute workers, memory, time of execution) needed to reproduce the experiments?
    \item[] Answer: \answerYes{} 
    \item[] Justification: We carried out all computational tasks on a suite of four 80GB A100 GPUs.
    \item[] Guidelines:
    \begin{itemize}
        \item The answer NA means that the paper does not include experiments.
        \item The paper should indicate the type of compute workers CPU or GPU, internal cluster, or cloud provider, including relevant memory and storage.
        \item The paper should provide the amount of compute required for each of the individual experimental runs as well as estimate the total compute. 
        \item The paper should disclose whether the full research project required more compute than the experiments reported in the paper (e.g., preliminary or failed experiments that didn't make it into the paper). 
    \end{itemize}
    
\item {\bf Code of ethics}
    \item[] Question: Does the research conducted in the paper conform, in every respect, with the NeurIPS Code of Ethics \url{https://neurips.cc/public/EthicsGuidelines}?
    \item[] Answer: \answerYes{} 
    \item[] Justification: Our research has been conducted with strict adherence to the NeurIPS Code of
Ethics.
    \item[] Guidelines:
    \begin{itemize}
        \item The answer NA means that the authors have not reviewed the NeurIPS Code of Ethics.
        \item If the authors answer No, they should explain the special circumstances that require a deviation from the Code of Ethics.
        \item The authors should make sure to preserve anonymity (e.g., if there is a special consideration due to laws or regulations in their jurisdiction).
    \end{itemize}

\item {\bf Broader impacts}
    \item[] Question: Does the paper discuss both potential positive societal impacts and negative societal impacts of the work performed?
    \item[] Answer: \answerYes{} 
    \item[] Justification: see Appendix \ref{sec:broader}.
    \item[] Guidelines:
    \begin{itemize}
        \item The answer NA means that there is no societal impact of the work performed.
        \item If the authors answer NA or No, they should explain why their work has no societal impact or why the paper does not address societal impact.
        \item Examples of negative societal impacts include potential malicious or unintended uses (e.g., disinformation, generating fake profiles, surveillance), fairness considerations (e.g., deployment of technologies that could make decisions that unfairly impact specific groups), privacy considerations, and security considerations.
        \item The conference expects that many papers will be foundational research and not tied to particular applications, let alone deployments. However, if there is a direct path to any negative applications, the authors should point it out. For example, it is legitimate to point out that an improvement in the quality of generative models could be used to generate deepfakes for disinformation. On the other hand, it is not needed to point out that a generic algorithm for optimizing neural networks could enable people to train models that generate Deepfakes faster.
        \item The authors should consider possible harms that could arise when the technology is being used as intended and functioning correctly, harms that could arise when the technology is being used as intended but gives incorrect results, and harms following from (intentional or unintentional) misuse of the technology.
        \item If there are negative societal impacts, the authors could also discuss possible mitigation strategies (e.g., gated release of models, providing defenses in addition to attacks, mechanisms for monitoring misuse, mechanisms to monitor how a system learns from feedback over time, improving the efficiency and accessibility of ML).
    \end{itemize}
    
\item {\bf Safeguards}
    \item[] Question: Does the paper describe safeguards that have been put in place for responsible release of data or models that have a high risk for misuse (e.g., pretrained language models, image generators, or scraped datasets)?
    \item[] Answer: \answerNA{} 
    \item[] Justification: The paper poses no such risks.
    \item[] Guidelines:
    \begin{itemize}
        \item The answer NA means that the paper poses no such risks.
        \item Released models that have a high risk for misuse or dual-use should be released with necessary safeguards to allow for controlled use of the model, for example by requiring that users adhere to usage guidelines or restrictions to access the model or implementing safety filters. 
        \item Datasets that have been scraped from the Internet could pose safety risks. The authors should describe how they avoided releasing unsafe images.
        \item We recognize that providing effective safeguards is challenging, and many papers do not require this, but we encourage authors to take this into account and make a best faith effort.
    \end{itemize}

\item {\bf Licenses for existing assets}
    \item[] Question: Are the creators or original owners of assets (e.g., code, data, models), used in the paper, properly credited and are the license and terms of use explicitly mentioned and properly respected?
    \item[] Answer: \answerNA{} 
    \item[] Justification: The paper does not use existing assets.
    \item[] Guidelines:
    \begin{itemize}
        \item The answer NA means that the paper does not use existing assets.
        \item The authors should cite the original paper that produced the code package or dataset.
        \item The authors should state which version of the asset is used and, if possible, include a URL.
        \item The name of the license (e.g., CC-BY 4.0) should be included for each asset.
        \item For scraped data from a particular source (e.g., website), the copyright and terms of service of that source should be provided.
        \item If assets are released, the license, copyright information, and terms of use in the package should be provided. For popular datasets, \url{paperswithcode.com/datasets} has curated licenses for some datasets. Their licensing guide can help determine the license of a dataset.
        \item For existing datasets that are re-packaged, both the original license and the license of the derived asset (if it has changed) should be provided.
        \item If this information is not available online, the authors are encouraged to reach out to the asset's creators.
    \end{itemize}

\item {\bf New assets}
    \item[] Question: Are new assets introduced in the paper well documented and is the documentation provided alongside the assets?
    \item[] Answer: \answerNA{} 
    \item[] Justification: The paper does not release new assets.
    \item[] Guidelines:
    \begin{itemize}
        \item The answer NA means that the paper does not release new assets.
        \item Researchers should communicate the details of the dataset/code/model as part of their submissions via structured templates. This includes details about training, license, limitations, etc. 
        \item The paper should discuss whether and how consent was obtained from people whose asset is used.
        \item At submission time, remember to anonymize your assets (if applicable). You can either create an anonymized URL or include an anonymized zip file.
    \end{itemize}

\item {\bf Crowdsourcing and research with human subjects}
    \item[] Question: For crowdsourcing experiments and research with human subjects, does the paper include the full text of instructions given to participants and screenshots, if applicable, as well as details about compensation (if any)? 
    \item[] Answer: \answerNA{} 
    \item[] Justification: The paper does not involve crowdsourcing nor research with human subjects.
    \item[] Guidelines:
    \begin{itemize}
        \item The answer NA means that the paper does not involve crowdsourcing nor research with human subjects.
        \item Including this information in the supplemental material is fine, but if the main contribution of the paper involves human subjects, then as much detail as possible should be included in the main paper. 
        \item According to the NeurIPS Code of Ethics, workers involved in data collection, curation, or other labor should be paid at least the minimum wage in the country of the data collector. 
    \end{itemize}

\item {\bf Institutional review board (IRB) approvals or equivalent for research with human subjects}
    \item[] Question: Does the paper describe potential risks incurred by study participants, whether such risks were disclosed to the subjects, and whether Institutional Review Board (IRB) approvals (or an equivalent approval/review based on the requirements of your country or institution) were obtained?
    \item[] Answer: \answerNA{} 
    \item[] Justification: The paper does not involve crowdsourcing nor research with human subjects.
    \item[] Guidelines:
    \begin{itemize}
        \item The answer NA means that the paper does not involve crowdsourcing nor research with human subjects.
        \item Depending on the country in which research is conducted, IRB approval (or equivalent) may be required for any human subjects research. If you obtained IRB approval, you should clearly state this in the paper. 
        \item We recognize that the procedures for this may vary significantly between institutions and locations, and we expect authors to adhere to the NeurIPS Code of Ethics and the guidelines for their institution. 
        \item For initial submissions, do not include any information that would break anonymity (if applicable), such as the institution conducting the review.
    \end{itemize}

\item {\bf Declaration of LLM usage}
    \item[] Question: Does the paper describe the usage of LLMs if it is an important, original, or non-standard component of the core methods in this research? Note that if the LLM is used only for writing, editing, or formatting purposes and does not impact the core methodology, scientific rigorousness, or originality of the research, declaration is not required.
    \item[] Answer: \answerNA{} 
    \item[] Justification: The LLM is used only for writing, editing.
    \item[] Guidelines:
    \begin{itemize}
        \item The answer NA means that the core method development in this research does not involve LLMs as any important, original, or non-standard components.
        \item Please refer to our LLM policy (\url{https://neurips.cc/Conferences/2025/LLM}) for what should or should not be described.
    \end{itemize}

\end{enumerate}

\end{document}